\documentclass{article}

% if you need to pass options to natbib, use, e.g.:
%     \PassOptionsToPackage{numbers, compress}{natbib}
% before loading neurips_2022

% ready for submission

% to compile a preprint version, e.g., for submission to arXiv, add add the
% [preprint] option:
%     \usepackage[preprint]{neurips_2022}

% to compile a camera-ready version, add the [final] option, e.g.:
%     \usepackage[final]{neurips_2022}

% to avoid loading the natbib package, add option nonatbib:
%    \usepackage[nonatbib]{neurips_2022}

\usepackage[utf8]{inputenc} % allow utf-8 input
\usepackage[T1]{fontenc}    % use 8-bit T1 fonts
\usepackage{hyperref}       % hyperlinks
\usepackage{url}            % simple URL typesetting
\usepackage{booktabs}       % professional-quality tables
\usepackage{amsfonts}       % blackboard math symbols
\usepackage{nicefrac}       % compact symbols for 1/2, etc.
\usepackage{microtype}      % microtypography
\usepackage{xcolor}         % colors
\usepackage[]{algorithmic} 
\usepackage{algorithm}
\usepackage{multirow}
\usepackage{amsmath} 
\usepackage{graphicx}
\usepackage{amsthm}
\usepackage{color}
\usepackage{soul}
\usepackage{xcolor,cancel}
\usepackage{authblk}

\newtheorem{theorem}{Theorem}[section]
\newtheorem{prop}[theorem]{Proposition}
\newtheorem{problem}{Problem}
\newtheorem{remark}[theorem]{Remark}
\newtheorem{lemma}[theorem]{Lemma}
\newtheorem{corollary}[theorem]{Corollary}
\newtheorem{assumption}[theorem]{Assumption}
%\numberwithin{remark}[theorem]{section}

\newcommand{\Var}{\mathrm{Var}}

\newcommand{\R}{\mathbb{R}}

\newcommand{\EE}{\mathbb{E}}

\newcommand{\PP}{\mathbb{P}}

\newcommand{\RR}{\mathbb{R}}
\newcommand{\mcC}{\mathcal{C}}
\newcommand{\mcD}{\mathcal{D}}
\newcommand{\mcF}{\mathcal{F}}

\newcommand{\mcG}{\mathcal{G}}

\newcommand{\mcB}{\mathcal{B}}

\newcommand{\mcQ}{\mathcal{Q}}

\newcommand{\mcL}{\mathcal{L}}

\newcommand{\bfY}{\mathbf{Y}}
\newcommand{\bfX}{\mathbf{X}}
\newcommand{\bfx}{\mathbf{x}}
\newcommand{\bfw}{\mathbf{w}}
\newcommand{\bfW}{\mathbf{W}}
\newcommand{\bfy}{\mathbf{y}}
\newcommand{\bfQ}{\mathbf{Q}}
\newcommand{\bfc}{\mathbf{c}}
\newcommand{\bfa}{\mathbf{a}}
\newcommand{\bfone}{\mathbf{1}}
\newcommand{\bfZ}{\mathbf{Z}}

\newcommand{\probp}{{\mathbb{P}}}
\newcommand{\probq}{{{Q}}}

\newcommand\abs[1]{\left|#1\right|}
\newcommand\Ep[1]{\mathbb{E}_\mathbb{P} \left[#1\right]}
\newcommand\Eq[1]{\mathbb{E}_\mathbb{Q} \left[#1\right]}

\newcommand{\rn}[2]{\frac{\mathrm{d}#1}{\mathrm{d}#2}}

\newcommand{\Linfty}{L^\infty}
\newcommand{\Lone}{L^1}
\newcommand{\Lzero}{L^0}

\newcommand{\rcondinfty}[1]{\rho\left(#1\right)}

\title{Multivariate Systemic Risk Measures and  Computation by Deep Learning Algorithms}

% The \author macro works with any number of authors. There are two commands
% used to separate the names and addresses of multiple authors: \And and \AND.
%
% Using \And between authors leaves it to LaTeX to determine where to break the
% lines. Using \AND forces a line break at that point. So, if LaTeX puts 3 of 4
% authors names on the first line, and the last on the second line, try using
% \AND instead of \And before the third author name.

\author[1]{A. Doldi}
\author[2]{Y. Feng}
\author[3]{J.-P. Fouque\thanks{Corresponding Author: fouque@pstat.ucsb.edu}}
\author[4]{M. Frittelli}
\affil[1,4]{Università degli Studi di Milano}
\affil[2,3]{University of California, Santa Barbara}

% \author{
% {Alessandro Doldi\thanks{}\and  Yichen Feng\thanks{ University of California, Santa Barbara}\and Jean-Pierre Fouque \thanks{University of California at Santa Barbara}\and Marco Frittelli \thanks{}} \\
  % examples of more authors
  % \AND
  % Coauthor \\
  % Affiliation \\
  % Address \\
  % \texttt{email} \\
  % \And
  % Coauthor \\
  % Affiliation \\
  % Address \\
  % \texttt{email} \\
  % \And
  % Coauthor \\
  % Affiliation \\
  % Address \\
  % \texttt{email} \\
%}

\begin{document}

\maketitle

\begin{abstract}
\noindent In this work we propose deep learning-based algorithms for the computation of systemic shortfall risk measures defined via multivariate utility functions. We discuss the key related theoretical aspects, with a particular focus on the fairness properties of primal optima and associated risk allocations. 
The algorithms we provide allow for learning primal optimizers, optima for the dual representation and corresponding fair risk allocations. We test our algorithms by comparison to a benchmark model, based on a paired exponential utility function, for which we can provide explicit formulas. We also show evidence  of convergence  in a case for which explicit formulas are not available.
\end{abstract}

\noindent\textbf{Keywords}:
Systemic risk measures, multivariate utility functions, primal and dual problems, deep learning algorithms.

\noindent\textbf{JEL Classification}: C61, C45.

\section{Introduction}
The axiomatic approach to Systemic Risk Measures (SRM) has received increasing attention in the recent literature, motivated by the need of capturing and managing the risk for complex systems of interacting entities. Given a system of $N$ agents, each endowed with a risky financial position, the overall initial exposure  can be modeled by an $N$-dimensional random vector $\bfX$. A natural question is then how to evaluate the total risk $\rho(\bfX)$ carried by the system $\bfX$. A detailed overview of the vast current literature on the topic can be found e.g. in \cite{BFFMB},\cite{biagini2020fairness}, \cite{doldi2021conditional} and we only recall here that two main streams can be identified in the research on the topic. These two (possibly overlapping) perspectives  take the suggestive names of "SRM of first aggregate-then-allocate type" and "SRM of first-allocate-then-aggregate type". They are distinguished by different approaches to the problems of aggregation and allocation in the evaluation of the overall risk $\rho(\bfX)$ associated to $\bfX$.  {\label{addcits}We mention \cite{CIM13}, \cite{KO16} for details on the first type of SRM, and \cite{AR20}, \cite{AKM21}, \cite{Drapeau}, \cite{FR17} for the second type}. A further distinction can be introduced regarding the type of allocation procedures one adopts: on the one hand, one might assume that the total amount securing the system is allocated according to information at initial time (i.e. adopting deterministic allocations), and on the other hand, one might allow for a terminal-time, scenario-dependent allocation.

The present paper is concerned with SRM of first-allocate-then-aggregate type, with random allocations, and of shortfall type. More precisely we consider as our primal problem the minimization
$$\rho _B(\bfX) =\inf\left\{ \sum\limits_{n=1}^N Y^n \mid \bfY\in \mathcal{C},\, \Ep{U(\bfX+\bfY)}\,\geq\, B \right\}$$
where $U$ is a multivariate utility function in the form \eqref{eqn:U} below, and $\mcC$ is a suitable collection of \emph{feasible allocations} all sharing the following feature: although the components of $\bfY\in\mcC$ are random, their componentwise sums are (a.s.) deterministic.

Based on the technical findings in \cite{doldi2021conditional} and \cite{doldi2022multivariate}, we provide some main properties for $\rho_B$ above. Along the lines of \cite{biagini2020fairness}, we discuss some \emph{fairness} properties for optimal allocations and corresponding risk allocations associated to $\rho_B$. We discuss a Nash-equilibrium type fairness, since using a multivariate utility function for the system induces preferences for each agent that \emph{depend on the action of other agents in the system}. 
 We conclude our discussion on theoretical aspects providing explicit formulas for all quantities of interest (primal and dual optima, the value of the systemic risk measure $\rho_B(\bfX)$, the form of the penalty function in the dual representation (see Problem \ref{dualprob} and Theorem \ref{thm:recap} Item 3 below for details, and section \ref{equilriskalloc} for interpretations) in a paired exponential setup.

 The second main part of our work deals with the problem of approximating primal and dual optima for the risk measure $\rho_B$ briefly described above, extending the deep learning techniques in \cite{feng2022deep} to the present  framework which allows for multivariate utilities. Regarding  the application of neural networks for approximating
the solutions to high-dimensional optimization problems, we also refer to \cite{feinstein2022deep} and \cite{onken2022neural}. Deep learning establishes an automatic algorithm to improve performance of tasks and thus has been applied to many areas; see for instance \cite{min2021signatured} for solving stochastic differential games.
We refer the reader to the latter two papers for an extended discussion on current literature, and we limit ourselves to citing \cite{goodfellow2020generative} for a GAN method, based on
solving a min-max problem, which shed some lights on our algorithm design in this paper. 

In section \ref{section3:alg} we propose Algorithm \ref{algo:primal} for approximating optima of the primal problem, namely optimizers in the definition of $\rho_B$, and Algorithm \ref{algo:dual} for approximating optima for the dual representation of $\rho_B(\bfX)$. Knowing (how to approximate) such optima is crucial for the evaluation of the risk allocations we mentioned before. Both the primal and dual algorithms we propose also approximate the value of $\rho_B(\bfX)$, which allows for comparisons and sanity checks.
In section \ref{sec:PairedNormalgroupExp} and section \ref{sec:PairedUBetagroupExp} we test, with good outcomes, our algorithms using the explicit formulas for the paired exponential utility as a benchmark, with laws of $\bfX$ based on Gaussian and Beta destributions respectively.
Section \ref{sec:OtherUNormalgroupExp}  treats a general case where formulas are not available. As an additional sanity check for our findings, we investigate if our candidate approximate optima can be obtained as a function of  the componentwise sum $\sum_{n=1}^NX^j$, since such a dependence is predicted by the theory. In both the cases of known and unknown explicit formulas we present evidence of a satisfactory behaviour for our approximation algorithms.

\subsection{Notation}
We fix an underlying probability space $(\Omega,\mcF,\probp)$, and the usual associated Lebesgue spaces for $p\in\{0\}\cup [1,\infty]$: $L^p=L^p(\Omega,\mcF,\probp)$. Throughout the paper, we use the extended notation in RHS only when the underlying measure is changed.  We use bold notation for vectors, be those deterministic vectors, random vectors or vectors of (probability) measures. For a (probability) measure on $(\Omega,\mcF)$ we write $Q\ll\probp$ for "$Q$ is absolutely continuous w.r.t. $\probp$", and denote by $\rn{Q}{\probp}\in \Lone$ the corresponding Radon-Nikodym derivative. For vectors of probability measures $\bfQ$ we write $\bfQ\ll\probp$ for componentwise absolute continuity. We set $$\mcQ:=\{\text{vectors }\bfQ=(Q^1,\dots,Q^N)\text{ of probability measures on }\mcF,\text{ s.t. } \bfQ\ll \probp\}$$ and denote by $\rn{\bfQ}{\probp}$ the vectors of Radon-Nikodym derivatives of $\bfQ\in\mcQ$. We write $\mathbf{1}=[1,\dots, 1]$. 

\section{Problem Setup}
Inspired by \cite{doldi2022multivariate}, we extend our deep learning techniques in \cite{feng2022deep} to a general setting where the multivariate utility function takes the form of 
\begin{equation}\label{eqn:U}
    U(\mathbf{x}):=\sum\limits_{n=1}^N u_n(x^n)+\Lambda(\mathbf{x}),\quad \mathbf{x}\in \mathbb{R}^N.
\end{equation}
Here, $u_1,\ldots,u_N : \mathbb{R} \to \mathbb{R}$ are univariate utility functions and
%satisfying Inada conditions, and 
$\Lambda : \mathbb{R}^N \to \mathbb{R}$ is concave, increasing 
%with respect to the partial componentwise order, 
and bounded from above (we defer  to Assumption \ref{safeassumtpion} below for details). The multivariate utility function $U(\mathbf{x})$ will be taken to fulfill all requirements in \cite{doldi2021conditional}, detailed below, which play the same role as the Inada conditions in the univariate case.
We mention as a key example the following multivariate paired exponential utility function
\begin{equation}
\begin{split}
\label{utilalternative}
U(x)&=\frac{N^2}{2}-\frac{1}{2}\left(\sum_{n=1}^N\exp\left(-\alpha_nx^n\right)\right)^2 \\
&=\frac{1}{2}\sum\limits_{n=1}^N (1-e^{-2\alpha_nx^n})+\frac{1}{2}\sum\limits_{n,m=1; n\neq m}^N(1-e^{-(\alpha_n x^n+\alpha_mx^m)})
\end{split}
\end{equation} 
which will have a prominent role in the following, since it allows for obtaining explicit formulas.

Using a multivariate utility in the form \eqref{eqn:U}, we introduce a systemic risk measure of the "first-allocate-then-aggregate type" following \cite{biagini2020fairness}, allowing for random allocations: we consider the set of random vectors whose componentwise sums are (a.s.) deterministic, namely
\begin{align*}
&\mathcal{C}_\mathbb{R} := \left\{\mathbf{Y}\in (\Lzero)^N\mid\sum\limits_{n=1}^N Y^n\in \mathbb{R}\right\},
\end{align*} and  we take a set of \emph{feasible random allocations} $\mathcal{C}\subseteq \mathcal{C}_\mathbb{R}$. We refer to \cite{BFFMB} and \cite{biagini2020fairness} for a detailed motivation, discussion and some examples regarding the scenario-dependent allocation procedure. We defer the technical assumptions  regarding the set of feasible allocations to Assumption \ref{safeassumtpion}, and introduce the primal problem we aim at studying.
\begin{problem}[Primal] The primal formulation of the systemic risk measure, for $\bfX\in(\Linfty)^N$, is given by
\begin{equation} 
\rho_B(\bfX)\,:=\,\inf\left\{ \sum\limits_{n=1}^N Y^n \mid \bfY\in \mathcal{C}\cap (\Linfty)^N,\, \Ep{U(\bfX+\bfY)}\,\geq\, B \right\}.
\label{eqn:defrho}
\end{equation}
\label{problem:primal}
\end{problem}
We stress again that each $Y^n$, $n=1,\ldots,N$, is random and depends on the scenario $\omega$ realized at terminal time $T$, but the sum of random allocations is deterministic and known at the beginning, i.e. $\sum_{n=1}^N Y^n\in \mathbb{R}$. Thus the \textit{overall systemic risk} $\rho(\bfX)$ can be interpreted as the minimal total cash amount needed today to secure the system by distributing the cash at the future time $T$ among the components of the risk vector $\bfX$. However, it is also important to know how much each financial institution of the system contributes to the overall systemic risk. We call a  \emph{risk allocation} of each financial institution $n$ some deterministic amount $\rho^n(\mathbf{X})\in \mathbb{R}$ satisfying the "Full Allocation" property, see \cite{brunnermeier2019measuring} for example, which reads \[\sum_{n=1}^N\rho^n(\bfX) = \rho(\bfX)\,.\] After a discussion on the main properties of $\rho_B(\cdot)$ and following \cite{biagini2020fairness}, we will provide in section \ref{equilriskalloc} a natural risk allocation procedure and argue \emph{fairness} for such allocations from different angles: we will explain how such procedure, also in this new multivariate setup, can be considered satisfactory both from the overall system's point of view and from the perspective of each single institutions in it.
Such a discussion and interpretation are based on convex duality, which will allow us to obtain the following result.

\begin{problem}[Dual]
\label{dualprob} The dual representation of the systemic risk measure in Problem \ref{problem:primal} is given by
\[ \rho _B(\bfX)= \max_{\mathbf{Q}\in \mathcal{D}}\left\{\sum_{n=1}^N\mathbb{E}_{Q^n}[-X^n]-\alpha_B(\mathbf{Q})\right\}\]
\label{rho}
where $\alpha_B$ and $\mcD$ are given below in \eqref{eqn:alpha} and \eqref{eq:dual_restriction_on_Q} respectively.
\label{problem:dual}
\end{problem}

\section{Theoretical aspects}
\label{sec:theo}
\subsection{Properties of $\rho_B$ and dual representation}
This section collects some key results regarding the shortfall type systemic risk measure we introduced in \eqref{eqn:defrho}. We greatly rely on \cite{doldi2021conditional}, which covers our setup. 
% We will motivate after the main results below how these can be found using their conditional counterparts. To begin with, observe that the choice of a trivial sigma algebra $\mcG=\{\emptyset,\Omega\}$ reduces the conditional features in \cite{doldi2021conditional} to the more classical real-valued situation.
We work under the following assumptions, which are rather technical but satisfied in all the examples and cases we consider in the following.
\begin{assumption}
\label{safeassumtpion}
$\,$

\begin{itemize}
\item $B\in
\R$ satisfies $B<\sup_{z\in{\mathbb{R}}^N}U(z)\leq
+\infty$.
\item Regarding $U$:
\begin{itemize}
\item %(\cite{doldi2021conditional} Standing Assumption I) 
$U:\R^N\rightarrow \R$ takes the form \eqref{eqn:U} where $u_{1},\dots ,u_{N}:{\mathbb{R}}\rightarrow {\mathbb{R}}$ are
strictly concave, strictly increasing and $\Lambda :{\mathbb{R}}^{N}\rightarrow {%
\mathbb{R}}$ is (possibly non strictly) concave and increasing with respect to the partial
componentwise order, and bounded from above. 
For every $\varepsilon >0$ there exist a point $z_\varepsilon\in{%
\mathbb{R}}^N$ and a selection $\nu_\varepsilon\in\partial
\Lambda(z_\varepsilon)$, such that $\sum_{j=1}^{N}\left\vert
\nu_\varepsilon\right\vert <\varepsilon\,.$
For each$\,j=1,\dots ,N\,,$ we also assume the Inada conditions 
\begin{equation*}
\lim_{x\rightarrow +\infty }\frac{u_{j}(x)}{x}=0\,\,\,\,\,\,\,\text{ and }%
\,\,\,\,\,\,\,\lim_{x\rightarrow -\infty }\frac{u_{j}(x)}{x}=+\infty
\end{equation*}%
and that, without loss of generality, $u_{j}(0)=0$. 
    \item %(\cite{doldi2021conditional} Assumption  5.10) 
    
If $\bfZ\in (\Lone)^{N}$ satisfies $(U(\bfZ))^{-}\in
\Lone$ then there exists $\delta >0$
s.t. $(U(\bfZ-\varepsilon \bfone))^{-}\in \Lone$ for all $ 0\leq \varepsilon <\delta$.
    \item %(\cite{doldi2021conditional} Assumption 5.12 together with  the comment immediately following it)
    If for $\bfZ\in (L^{0}\left( (\Omega ,\mathcal{F},{%
\mathbb{P}});[-\infty ,+\infty ]\right) )^{N}$ there exist $\lambda
_{1},\dots ,\lambda _{N}>0$ such that $\Ep{
u_{j}(-\lambda _{j}\left\vert Z^{j}\right\vert )} >-\infty $ for every $j=1,\dots,N$, then
there exists $\alpha >0$ such that $\Ep{ \Lambda
(-\alpha \left\vert \bfZ\right\vert )} >-\infty $
\end{itemize}
    \item Regarding $\mcC$: %(\cite{doldi2021conditional} Standing Assumption I, and Example 5.2 ,in the non conditional framework)
    $\mcC\subseteq \mcC_\R$ is a convex cone closed in probability. Moreover $\mcC+ \R^N=\mcC$ and  $\mcC$ is closed under truncation: $\forall
\bfY\in \mcC$ there exists $k_{\bfY}\in \mathbb{N}$ and a $\bfc_{\bfY}\in
\R^N$ such that $\sum_{j=1}^{N}c_{Y}^{j}=%
\sum_{j=1}^{N}Y^{j}$ and $\forall k\geq k_{\bfY},\,k\in \mathbb{N}$ 
\[
\bfY_{(k)}:=\bfY1_{\bigcap_{j}\{\left\vert Y^{j}\right\vert <
k\}}+\bfc_{\bfY}1_{\bigcup_{j}\{\left\vert Y^{j}\right\vert \geq k\}}\in \mcC. \]
\end{itemize}
\end{assumption}

We  collect in the following statement all we need from the technical point of view: existence and uniqueness of primal optima, a dual representation and existence of dual optima.

\begin{theorem}
\label{thm:recap}
Under Assumption \ref{safeassumtpion}

\begin{enumerate}
\item {$\rho_B:(\Linfty)^N\rightarrow \R$ is a systemic convex risk measure, namely $\rho_B$ is monotone (w.r.t the componentwise a.s. order), convex and monetary (i.e. $\rho_B(\bfX+\mathbf{m})=\rho_B(\bfX)-\sum_{n=1}^Nm^n$ for all $\mathbf{m}\in\R^N$). It is also
continuous from above and from below.}
\item{For every $X\in (\Linfty)^{N}$ 
\begin{equation}
\label{eqn:foroptimumprimal}
\rho _B(\bfX) =\inf\left\{ \sum\limits_{n=1}^N Y^n \mid \bfY\in \mathcal{C}\cap (\Lone)^N,\, \Ep{U(\bfX+\bfY)}\,\geq\, B \right\}
\end{equation}}

and the essential infimum in RHS is attained by a unique  $\bfY_\bfX\in\mathcal{C}\cap (\Lone)^N$.

\item $\rho_B$ admits the following dual
representation:%
\begin{equation}
\rho_B(\bfX) =\max_{{\bfQ}\in %
\mcD}\left( \sum_{j=1}^{N}\mathbb{E}_{{Q}%
^{j}}\left[ -X^{j}\right] -\alpha_B(\bfQ)\right)\quad \forall \,\bfX\in (\Linfty)^{N}
\label{DynRMeqdualreprshorfall}
\end{equation}%
where for every vector $\bfQ=(Q^1,\dots,Q^N)\in\mcQ$
\begin{align}
& \alpha_B(\bfQ):= \sup\left\{ \sum_{n=1}^N\mathbb{E}_{Q^n}[-Z^n]\mid \mathbf{Z}\in(\Linfty)^N, \Ep{U(\mathbf{Z})}\geq B\right\},  \label{eqn:alpha} \\
 &\mcD:=\left\{\bfQ\in\mcQ\mid \alpha_B(\bfQ)<+\infty, 
\sum_{j=1}^N\mathbb{E}_{\mathbb{Q}^j}
\left[Y^j\right]\leq \sum_{j=1}^NY^j,\,\forall
\bfY\in\mcC\cap(\Linfty)^N\right\} .\label{eq:dual_restriction_on_Q}
\end{align}
In the particular case $\mcC=\mcC_\R$ we have 
\begin{equation}
    \label{polarity}
\mcD=\left\{\bfQ\in\mcQ\mid \alpha_B(\bfQ)<+\infty, \text{ } 
Q^i=Q^j\,\text{ on } \mcF \text{ } \forall
i,j\in\{1,\dots,N\}\right\} 
\end{equation}
\item For the unique optimum $\bfY_\bfX$ in RHS of \eqref{eqn:foroptimumprimal} and for any $\bfQ\in \mcD$ it holds that  $Y^j_\bfX\in L^{1}((\Omega ,\mathcal{F},{\mathbb{Q}}^{j}),j=1,\dots,N$ and 
\[\sum_{j=1}^{N}\mathbb{E}_{\mathbb{Q}^{j}}\left[ Y_\bfX^{j}\right] \leq \sum_{j=1}^{N}Y_\bfX^{j}\,.\]
Moreover, setting
\begin{equation}
\label{defLcal}
    \mcL=\mcL^1\times\dots\times\mcL^N:=(\Lone)^{N}\cap \bigcap_{\bfQ\in \mcD} L^{1}(\Omega,\mcF,Q^1)\times\dots\times L^{1}(\Omega,\mcF,Q^N)\,,
\end{equation}
we have:  
$\bfY_\bfX\in \mcL$, $\Ep{ U\left( \bfX+\bfY_\bfX\right)} \geq B$ and for any optimum 
$\bfQ_\bfX$ of \eqref{DynRMeqdualreprshorfall},
\begin{equation}
\label{eq:optimaareOK}
\rho _B(\bfX)=\sum_{n=1}^{N}\mathbb{E}_{\bfQ_\bfX^n}\left[ Y^n_\bfX\right]=\min\left\{
\sum_{n=1}^{N}\mathbb{E}_{Q_\bfX^{n}}\left[ Y^{n}\right] \mid \mathbf{Y}\in \mcL,\,\Ep{ U\left( X+Y\right)} \geq B\right\}  \,.
\end{equation}%

\item
 Suppose additionally that $u_1,\dots,u_N$ are bounded from above, and that $U$ is differentiable. Then the optimum for \eqref{DynRMeqdualreprshorfall} is unique in $\mcD$, and if $\mcC=\mcC_\R$ both $\rn{\bfQ_\bfX}{\probp}$ and $\bfX+\bfY_\bfX$ are (essentially) $\sigma(X^1+\dots+X^N)$-measurable.
 \end{enumerate}
\end{theorem}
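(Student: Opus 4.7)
My plan is to derive both conclusions from the Karush--Kuhn--Tucker conditions at the unique primal optimizer $\bfY_\bfX$, whose existence and uniqueness are supplied by part 2 of the theorem, combined with the fact (part 4) that $\bfX+\bfY_\bfX$ attains the supremum defining $\alpha_B(\bfQ_\bfX)$ for any dual optimum $\bfQ_\bfX$. The two additional hypotheses---differentiability of $U$ and upper boundedness of the $u_n$---are precisely what is needed to turn that supremum into a pointwise optimality condition and to legitimate the interchange of derivative and expectation.

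The first step is to establish the optimality relation
\begin{equation*}
\rn{Q_\bfX^n}{\probp} \;=\; \frac{1}{\mu_\bfX}\,\partial_n U(\bfX+\bfY_\bfX),\qquad n=1,\ldots,N,
\end{equation*}
where $\mu_\bfX>0$ is the Lagrange multiplier attached to the constraint $\Ep{U(\bfZ)}\ge B$ inside the sup defining $\alpha_B(\bfQ_\bfX)$, pinned down by the normalization $\Ep{\rn{Q_\bfX^n}{\probp}}=1$. Since $U$ is differentiable and $\bfY_\bfX$ is unique, the right-hand side is uniquely determined, and so is $\bfQ_\bfX$: this proves uniqueness of the dual optimum in $\mcD$.

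For the measurability claim, specialize to $\mcC=\mcC_\R$. By \eqref{polarity} the components of $\bfQ_\bfX$ all coincide, so the relation above forces
\begin{equation*}
\partial_1 U(\bfZ_\bfX) \;=\; \cdots \;=\; \partial_N U(\bfZ_\bfX)\quad\text{a.s.,}\qquad \bfZ_\bfX:=\bfX+\bfY_\bfX.
\end{equation*}
Setting $c:=\sum_n Y_\bfX^n\in\R$ (deterministic since $\bfY_\bfX\in\mcC_\R$) and $S:=\sum_n X^n$, one has $\sum_n Z_\bfX^n=S+c$. Consider the finite-dimensional program
\begin{equation*}
\varphi(s)\;:=\;\arg\max\Bigl\{\sum_{n=1}^N u_n(z^n)+\Lambda(\bfz)\,\Big|\,\bfz\in\R^N,\ \textstyle\sum_n z^n=s\Bigr\}.
\end{equation*}
Strict concavity of each $u_n$ together with concavity of $\Lambda$ makes the objective strictly concave on the constraint hyperplane, so $\varphi$ is a well-defined measurable map whose KKT conditions coincide with the displayed equalities. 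Uniqueness then forces $\bfZ_\bfX=\varphi(S+c)$ a.s., yielding $\sigma(S)$-measurability of $\bfX+\bfY_\bfX$, and the FOC transfers this measurability to $\rn{\bfQ_\bfX}{\probp}$.

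The main technical hurdle is justifying the displayed FOC rigorously: one must verify that the utility constraint is active at the optimum---so that $\mu_\bfX>0$---and that $\bfZ\mapsto\Ep{U(\bfZ)}$ may be differentiated under the expectation at $\bfZ=\bfX+\bfY_\bfX$. Activity of the constraint is immediate from monotonicity of $\rho_B$, while differentiation under the expectation follows from a dominated convergence argument that uses upper boundedness of the $u_n$ on the positive side and the integrability $\bfY_\bfX\in\mcL$ from \eqref{defLcal}, combined with the third bullet of Assumption \ref{safeassumtpion}, on the negative side.
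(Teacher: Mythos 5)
Your proposal addresses only the final assertion of Theorem \ref{thm:recap} (uniqueness of the dual optimum and the $\sigma(X^1+\dots+X^N)$-measurability), taking Items 1--4 as inputs. Since the paper itself obtains Items 1--4 purely by citation (Theorem 5.4, Propositions 5.11, 5.13 and Theorem 5.14 of \cite{doldi2021conditional}, and \cite{biagini2020fairness} for \eqref{polarity}), this restriction of scope is defensible, but you should say so explicitly. For the part you do treat, your route is genuinely different from the paper's. The paper proves it in Proposition \ref{propuniquedual} by recognizing $(\bfY_\bfX,\bfa_\bfX,\bfQ_\bfX)$ as an mSORTE (Theorem \ref{thm:msorte}) and invoking the uniqueness and measurability theorems of \cite{doldi2022multivariate}; the only in-house analysis is the Orlicz-space verification that every $\bfQ\in\mcD$ satisfies $\Ep{V\left(\lambda\rn{\bfQ}{\probp}\right)}<+\infty$, which is precisely where boundedness of the $u_n$ (equivalence of $v_j$- and $\Phi_j^*$-integrability) and of $\Lambda$ (giving $V\leq\sum_n v_n$) are used. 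You instead derive everything from the pointwise first-order condition $\rn{Q_\bfX^n}{\probp}=\partial_nU(\bfX+\bfY_\bfX)/\mu_\bfX$. This is indeed the correct relation --- the explicit computation for the paired exponential utility in Theorem \ref{thmexplicitformulas} instantiates exactly $\nabla U(\bfX+\bfY_\bfX)=\widehat{\lambda}(B)\,\rn{\widehat{\probq}}{\probp}\,\mathbf{1}$ --- and, granting it, both your uniqueness argument (every dual optimum is tied to the same unique $\bfY_\bfX$, hence determined) and your measurability argument (equal partial derivatives plus the deterministic componentwise sum $S+c$ identify $\bfX+\bfY_\bfX$ with the unique maximizer $\varphi(S+c)$ of the strictly concave hyperplane problem) are correct and considerably more transparent than the citation chain.

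The genuine gap is that the first-order condition, which carries the entire weight of your argument, is left as a plan. To make it rigorous you must (i) resolve the domain mismatch: the supremum in \eqref{eqn:alpha} ranges over $(\Linfty)^N$ while $\bfX+\bfY_\bfX$ lies only in $\mcL$, so the optimality you can actually invoke is the $\mcL$-version \eqref{eq:optimaareOK}; (ii) establish existence of the multiplier $\mu_\bfX>0$ and activity of the constraint, which needs the third bullet of Assumption \ref{safeassumtpion} to perturb along $-\varepsilon\mathbf{1}$; and (iii) pass from the integrated variational inequality to the pointwise identity, which requires showing $\partial_nU(\bfX+\bfY_\bfX)\in\Lone$ before it can be normalized into a density --- a bare dominated-convergence appeal does not obviously supply this, since the admissible perturbations are not uniformly bounded and the gradient of $U$ can blow up where $\bfX+\bfY_\bfX$ is very negative. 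This integrability control is exactly what the paper's Orlicz-space argument ($\mcD\subseteq\mathcal{Q}_{V,\mcB}$) provides, so your sketch is not circumventing the hard analysis, only relocating it. You should also flag the consequence, implicit in your normalization, that $\Ep{\partial_nU(\bfX+\bfY_\bfX)}$ must be independent of $n$; this is true at the optimum but is itself part of what needs proof.
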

\begin{proof}
Set $\mcG=\{\emptyset,\Omega\}$. In this proof, we make reference to the results in \cite{doldi2021conditional}. The fact that $\rho_B$ is a systemic risk measure, continuity from above and below, existence of a primal optimizer, and the dual representation are obtained from  Theorem 5.4. Uniqueness of primal optimizers comes from Proposition 5.11. The claims in  Item 4 of the statement derive from Proposition 5.13 and Theorem  5.14 (observe that Assumption \ref{safeassumtpion} in this paper absorbs both Assumptions 5.10 and  5.12 in \cite{doldi2021conditional}).
The particular form in \eqref{polarity} is obtained exactly as in \cite{biagini2020fairness} Proposition 3.1 Item (ii). Dual uniqueness is proved in Proposition \ref{propuniquedual}, as well as the claimed measurability of optima.\end{proof}
In particular, Item 3 of Theorem \ref{thm:recap} proves the duality stated in Problem \ref{problem:dual}.
\begin{remark}
   \label{remmovetol1} 
We stress that \eqref{eqn:foroptimumprimal} entails the statement  that the value of $\rho_B$ is left unchanged if the intersection with  $(L^\infty)^N$ in the definition \eqref{eqn:defrho} is replaced by the one with $(L^1)^N$. This motivates the use of the term "optimum" or "minimum" for $\mathbf{Y}_\mathbf{X}$.
\end{remark}

\subsection{Equilibrium risk allocation}
\label{equilriskalloc}

We know from Theorem \ref{thm:recap} that, given a position $\bfX\in (\Linfty)^N$ and the primal and dual optima $\bfY_\bfX,\bfQ_\bfX$ for $\rho_B(\bfX)$, the amounts\begin{equation}\label{eqn:IndividualRiskAllocation}
    \rho^n(\bfX)\, :=\, \mathbb{E} _{{Q}^{n}_{\bfX}} [ Y^{n}_{\bfX} ]\,\quad \text{for}\quad n =1,\ldots,N,
 \end{equation} are well defined. 
  We stress also that only for a particular class of utility functions, stated in section \ref{sec:PairedNormalgroupExp} below,  explicit formulas for such optima are available, which justifies the approximation techniques adopted in the second part of this paper.

These amounts 
%in \eqref{eqn:IndividualRiskAllocation} 
are natural candidates as risk allocations, as  \eqref{eq:optimaareOK} shows that the full allocation property for the risk allocations given by \eqref{eqn:IndividualRiskAllocation} holds. 
Arguing verbatim as in \cite{biagini2020fairness}, \eqref{eq:optimaareOK} "shows that the valuation by $\mathbb{E}_{ \bfQ_\bfX}[\cdot]$ agrees with the systemic risk measure $\rho_B(\bfX)$. This supports the introduction of $\mathbb{E}_{ \bfQ_\bfX}[\cdot]$ as a suitable systemic valuation operator." 
  
  We now motivate why such a choice is fair for the agents in the system. 
 %to collect cash at the beginning from each financial institution. 
 We follow in spirit \cite{biagini2020fairness}, but some conceptual and technical modifications are in place, to take into account the multivariate nature of the general utility functional $U$ we are adopting in this work.

First, one can argue in favour of the choice of the triple $(\bfY_\bfX,(\rho^n(\bfX))_n,\bfQ_\bfX)$ by recalling that it constitutes a suitably defined equilibrium, the Multivariate Systemic Optimal Risk Transfer Equilibrium (mSORTE, see \cite{doldi2022multivariate} for a detailed description).
\begin{theorem}[\cite{doldi2021conditional} Theorem 7.3]
\label{thm:msorte}
Suppose Assumption \ref{safeassumtpion} holds. Let $\bfX\in (\Linfty)^{N}$.
Let $\bfY_\bfX$ be the unique primal optimum from Theorem \ref{thm:recap}, and let $\bfQ_\bfX\in\mcD$ be a dual optimum in \eqref{DynRMeqdualreprshorfall}. Define $a_\bfX^{n}:=\mathbb{E}_{Q_\bfX^n}[Y_\bfX^n]$ for $n=1,\dots,N$. Then $(\bfY_\bfX,\bfa_\bfX,\bfQ_\bfX)$ is a mSORTE with budget $\rho_B(\bfX)$, namely:
\begin{enumerate}
\item $(\bfY_\bfX,\bfa_\bfX)$ is an optimum for 
\begin{equation}
\sup_{\substack{ \bfa\in\R^N  \\ %
\sum_{j=1}^{N}a^n=\rho_B(\bfX)}}\left\{ \sup\left\{ \Ep{
 U(\bfX+\bfY)} \mid \bfY\in \mcL,\,\mathbb{E}_{Q_\bfX^{n}}\left[ Y^{n}
\right] \leq a^n,\,\forall n\right\} \right\} \,;
\label{DynRMeqmsorteparticular}
\end{equation}

\item $\bfY_\bfX\in\mcC$ and $\sum_{n=1}^{N}Y_{\bfX}^{n}=\rho_B(\bfX)\, {\mathbb{P}}$-a.s..
\end{enumerate}
    
\end{theorem}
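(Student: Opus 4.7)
The plan is to leverage Theorem \ref{thm:recap}, especially equation \eqref{eq:optimaareOK} in Item 4, which already characterizes $\bfY_\bfX$ as the unique minimizer of $\sum_n \mathbb{E}_{Q_\bfX^n}[Y^n]$ over $\bfY \in \mcL$ satisfying $\mathbb{E}_\PP[U(\bfX+\bfY)] \geq B$. This is essentially the Lagrangian dual of the mSORTE problem with multiplier $\bfQ_\bfX$, so the mSORTE claim is a small repackaging of what has already been proved, plus a clean contradiction argument. Item 2 will follow directly from the primal existence statement in Theorem \ref{thm:recap}, and Item 1 by feasibility plus a translation argument.

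For Item 2, I would note that Theorem \ref{thm:recap}, Item 2, delivers $\bfY_\bfX \in \mcC \cap (\Lone)^N$, and since $\mcC \subseteq \mcC_\R$ the sum $\sum_n Y^n_\bfX$ is $\PP$-a.s.\ constant; this constant is the objective value attained in \eqref{eqn:foroptimumprimal}, hence equals $\rho_B(\bfX)$. For the feasibility half of Item 1, $\bfY_\bfX \in \mcL$ by Theorem \ref{thm:recap}, Item 4, the componentwise constraints $\mathbb{E}_{Q^n_\bfX}[Y_\bfX^n] \leq a_\bfX^n$ hold by the very definition $a_\bfX^n := \mathbb{E}_{Q^n_\bfX}[Y_\bfX^n]$, and the budget constraint $\sum_n a_\bfX^n = \rho_B(\bfX)$ is exactly the first equality in \eqref{eq:optimaareOK}.

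For the optimality half of Item 1, I would argue by contradiction: suppose $(\bfY',\bfa')$ is feasible with $\mathbb{E}_\PP[U(\bfX+\bfY')] > \mathbb{E}_\PP[U(\bfX+\bfY_\bfX)] \geq B$. Using the inequalities $\mathbb{E}_{Q^n_\bfX}[Y'^n] \leq a'^n$ and the budget $\sum_n a'^n = \rho_B(\bfX)$, we deduce $\sum_n \mathbb{E}_{Q^n_\bfX}[Y'^n] \leq \rho_B(\bfX)$. Now consider the shifted allocation $\tilde{\bfY} := \bfY' - \eps \bfone$ for small $\eps>0$; since $\mcL$ is a linear space containing constants, $\tilde{\bfY} \in \mcL$. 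By monotonicity of $U$ and the bound $U \leq \sup U$, the monotone convergence theorem yields $\mathbb{E}_\PP[U(\bfX + \bfY' - \eps \bfone)] \uparrow \mathbb{E}_\PP[U(\bfX+\bfY')] > B$ as $\eps \downarrow 0$, so for $\eps$ small enough $\mathbb{E}_\PP[U(\bfX + \tilde{\bfY})] \geq B$. But then $\sum_n \mathbb{E}_{Q^n_\bfX}[\tilde{Y}^n] \leq \rho_B(\bfX) - \eps N < \rho_B(\bfX)$, contradicting the minimality asserted in \eqref{eq:optimaareOK}. Hence no such $(\bfY',\bfa')$ exists, proving optimality.

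The main delicate point I foresee is the interchange of limit and expectation in the shift argument; this is not automatic since $U$ may be unbounded below, and one must rely on the fact that $U(\bfX+\bfY' - \eps\bfone)$ is monotone in $\eps$ and pointwise continuous, with integrable upper envelope. A secondary small check is the compatibility of the two feasibility sets: the mSORTE problem constrains $\bfY$ to $\mcL$ rather than $\mcC$, but this is precisely the set over which \eqref{eq:optimaareOK} is formulated, so no further compatibility argument is needed.
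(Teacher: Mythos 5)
Your argument is correct, but it is worth noting that the paper itself does not prove this statement at all: Theorem \ref{thm:msorte} is imported verbatim as Theorem 7.3 of \cite{doldi2021conditional}, so the ``paper's proof'' is a citation to the external mSORTE machinery. What you have done instead is give a short self-contained derivation from the facts already recalled in Theorem \ref{thm:recap}: feasibility of $(\bfY_\bfX,\bfa_\bfX)$ is read off from the first equality in \eqref{eq:optimaareOK}, Item 2 from the attainment statement in \eqref{eqn:foroptimumprimal} together with $\mcC\subseteq\mcC_\R$, and optimality by showing that a strictly better feasible pair could be shifted down by $\eps\bfone$ to violate the minimality in \eqref{eq:optimaareOK}. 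This is a legitimately different and more transparent route: it exhibits the mSORTE property as an elementary consequence of the complementary-slackness identity \eqref{eq:optimaareOK}, whereas the cited reference establishes it within a general Orlicz-space equilibrium framework; the price is that your proof still rests on \eqref{eq:optimaareOK}, which is itself only quoted, so nothing becomes independent of \cite{doldi2021conditional}. One small correction on the point you flag as delicate: what makes the limit $\Ep{U(\bfX+\bfY'-\eps\bfone)}\uparrow\Ep{U(\bfX+\bfY')}$ legitimate is not an ``integrable upper envelope'' (that would be the hypothesis for dominated convergence, and $U$ need not be bounded above here since only $\Lambda$ is), but rather integrability of the \emph{negative} part of the lowest element of the increasing family, i.e.\ $(U(\bfX+\bfY'-\eps_0\bfone))^-\in\Lone$ for some $\eps_0>0$; this is exactly what the third bullet on $U$ in Assumption \ref{safeassumtpion} provides, applied to $\bfZ=\bfX+\bfY'$, whose negative-part integrability follows from $\Ep{U(\bfX+\bfY')}\geq B>-\infty$. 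With that reading, the monotone convergence step closes and the contradiction argument is sound.
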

Theorem \ref{thm:msorte} allows for the following discussion:  $\bfY_\bfX$ is an optimum for 
\begin{equation}
\label{maxsyst}
 \max\left\{ \Ep{
 U(\bfX+\bfY)} \mid \bfY\in \mcL,\,\mathbb{E}_{Q_\bfX^{n}}\left[ Y^{n}
\right] \leq \mathbb{E}_{Q_\bfX^n}[Y_\bfX^n],\,\forall n\right\} .    
\end{equation}

If $U(\mathbf{x})=\sum\limits_{n=1}^N u_n(x^n)$ (which,   apart from technicalities such as the use of Orlicz Spaces, corresponds to the setup in \cite{biagini2020fairness} with $\Lambda=0$), we could easily infer from \eqref{maxsyst} that, for any $n=1,\dots,N$, $Y_\bfX^n$ is an optimum for the single agent's uitlity maximization problem 
\[\max\left\{\Ep{u_n(X^n+Y^n)}\mid Y^n\in\mcL^n \text{ }\mathbb{E}_{Q_\mathbf{X}^n}[Y^n] \leq  \mathbb{E}_{Q_\mathbf{X}^n}[Y_\mathbf{X}^n]\right\}.\]
This in \cite{biagini2020fairness} allowed for the claim that "the optimal allocation $Y_\bfX^n$
and its value $\mathbb{E}_{Q_\mathbf{X}^n}[Y_\mathbf{X}^n]$ can thus be considered fair by
the $n$-th bank as  $Y^n_\bfX$ maximises its individual expected utility among all random
allocations (not constrained to be in $\mcC_\R$) with value $\mathbb{E}_{Q_\mathbf{X}^n}[Y_\mathbf{X}^n]$. [...] This finally argues for the fairness of the risk allocation
$(\mathbb{E}_{Q_\mathbf{X}^1}[Y_\bfX^1],\dots, \mathbb{E}_{Q_\mathbf{X}^N}[Y_\bfX^N])$ as fair valuation of the optimal scenario-dependent allocation $\bfY_\bfX$."
The presence of a possibly nontrivial $\Lambda$ here does not allow for a direct adaptation of such an argument, in that one cannot in general split the optimization \eqref{maxsyst} as single agent's utility maximization problems. However, something conceptually similar can be carried over.

To this end, we introduce the following compact notation: for a vector $\bfx\in\R^N$ and $z\in\R$ we write $[\bfx^{[-j]},z]\in \R^N$ for the $N-$dimensional vector obtained substituting $z$ to the $j-$th component of $\bfx$, $j=1,\dots,N$. The identical notation is used for random vectors.
Inspired by the Nash Equilibrium type property in \cite{doldi2022multivariate} Theorem 4.5, we can state the following easy corollary:
\begin{corollary}
\label{cor:nash}
With the same assumptions and notation as in Theorem \ref{thm:msorte}, for every $n=1,\dots,N$, $Y_\bfX^n$ is optimal for the  maximization problem
\begin{equation}
\label{eqn:nash}
    \sup\left\{\Ep{U\left(\bfX+[\bfY_\bfX^{[-n]},Y]\right)}\mid Y\in \mcL^n, \mathbb{E}_{Q_\bfX^{n}}\left[ Y
\right] \leq \mathbb{E}_{Q_\bfX^n}[Y_\bfX^n]\right\}\,.
\end{equation}
\end{corollary}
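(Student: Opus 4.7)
The plan is to derive Corollary \ref{cor:nash} from Theorem \ref{thm:msorte} by a one-coordinate deviation (or ``freeze all but one agent'') argument, which is entirely elementary once the mSORTE characterization is in hand.

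First, I would read off from Theorem \ref{thm:msorte}, item 1, that $\bfY_\bfX$ attains the inner supremum in \eqref{DynRMeqmsorteparticular}, so in particular $\bfY_\bfX$ is an optimum for \eqref{maxsyst}, namely among all $\bfY\in\mcL$ satisfying $\mathbb{E}_{Q_\bfX^j}[Y^j]\leq \mathbb{E}_{Q_\bfX^j}[Y_\bfX^j]$ for every $j=1,\dots,N$. I also record that $Y_\bfX^j\in\mcL^j$ for each $j$, which follows from $\bfY_\bfX\in\mcL$ (Theorem \ref{thm:recap}, Item 4) combined with the product structure of $\mcL$ displayed in \eqref{defLcal}.

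Next, I fix $n\in\{1,\dots,N\}$ and pick any competitor $Y\in\mcL^n$ with $\mathbb{E}_{Q_\bfX^n}[Y]\leq\mathbb{E}_{Q_\bfX^n}[Y_\bfX^n]$. I form the one-coordinate deviation $\tilde{\bfY}:=[\bfY_\bfX^{[-n]},Y]$. Since $\mcL=\mcL^1\times\cdots\times\mcL^N$, we have $\tilde{\bfY}\in\mcL$. For $j\neq n$ the constraint $\mathbb{E}_{Q_\bfX^j}[\tilde{Y}^j]\leq \mathbb{E}_{Q_\bfX^j}[Y_\bfX^j]$ holds with equality because $\tilde{Y}^j=Y_\bfX^j$, and for $j=n$ it holds by construction; hence $\tilde{\bfY}$ is feasible for \eqref{maxsyst}. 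Invoking the optimality of $\bfY_\bfX$ in \eqref{maxsyst} then yields
\[
\Ep{U(\bfX+[\bfY_\bfX^{[-n]},Y])} \,\leq\, \Ep{U(\bfX+\bfY_\bfX)},
\]
which is precisely the claim that $Y=Y_\bfX^n$ attains the supremum in \eqref{eqn:nash} (noting that $Y_\bfX^n$ is itself feasible in \eqref{eqn:nash}).

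I do not foresee a real obstacle: the argument is a pure ``freeze all-but-one coordinate'' device that converts joint optimality into coordinatewise optimality in the spirit of a Nash equilibrium, and the only ingredient requiring a moment of care is the product structure of $\mcL$ used to guarantee $\tilde{\bfY}\in\mcL$, which is immediate from \eqref{defLcal}. Note that, importantly, in \eqref{eqn:nash} the objective still contains the full multivariate $U$ with $\bfY_\bfX^{[-n]}$ plugged into the other slots, so the possibly nontrivial interaction term $\Lambda$ is \emph{not} discarded — this is precisely what distinguishes the present corollary from the separable case in \cite{biagini2020fairness}.
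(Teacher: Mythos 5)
Your argument is correct and is exactly the route the paper intends: the corollary is stated without a written proof, but the preceding discussion extracts from Theorem \ref{thm:msorte} precisely that $\bfY_\bfX$ is optimal for \eqref{maxsyst}, and the corollary then follows by the one-coordinate deviation $[\bfY_\bfX^{[-n]},Y]$, which is feasible for \eqref{maxsyst} thanks to the product structure of $\mcL$ in \eqref{defLcal}, just as you write. No gaps.
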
%
The a priori surprising fact that the optimum for \eqref{eqn:nash} actually produces the $n-$th component  of $\bfY_\bfX$ and that such an optimum yields an element of $\mcC$ (which requires $\sum_{n=1}^NY_\bfX^n\in\R$) is a consequence of the particular choice $Q^n_\bfX$ as (equilibrium) pricing measure.

We can still argue %as in \cite{biagini2020fairness} 
that the optimal allocation $(\bfY_\bfX,\bfa_\bfX)$ can thus be considered fair by
the $n-$th bank in the system. This time, however, an equilibrium is at play: given any vector $\bfW\in\mcL$, $\Ep{U\left(\bfX+[\bfW^{[-n]},Y]\right)}$ in \eqref{eqn:nash} stands for the expected utility for agent $n$, \emph{given that all the other agents in the system adopt the strategy $(W^1,\dots,W^{n-1},W^{n+1},\dots,W^N)$}. 
Let us now look at \eqref{eqn:nash}, for a fixed $n$. Supposing that all other agents but the $n$-th one adopt the schenario-dependent allocations $Y_\bfX^j$ and the corresponding value $ \mathbb{E}_{Q_\bfX^j}[Y_\bfX^j] ,j\neq n$:
\begin{itemize}
\item the risk allocation $\mathbb{E}_{Q_\bfX^n}[Y_\bfX^n]$ naturally arises from  the full allocation property as by \eqref{eq:optimaareOK} \[\mathbb{E}_{Q_\bfX^n}[Y_\bfX^n]=\rho_B(\bfX)-\sum_{j\neq n}\mathbb{E}_{Q_\bfX^j}[Y_\bfX^j].\]
\item the scenario-dependet allocation $Y_\bfX^n$ naturally arises by optimality of $\bfY_\bfX$ as \[Y_\bfX^n=\rho_B(\bfX)-\sum_{j\neq n}Y_\bfX^j\]
\item even if agent $n$ were allowed to look for other scenario-dependent allocations $Y^n$ among all those variables whose initial "price" does not exceed the budget $\mathbb{E}_{Q_\bfX^n}[Y_\bfX^n]$ (with such $Y^n$ not necessarily satisfying the terminal-time full allocation requirement that $Y^n+\sum_{j\neq n}Y_\bfX^j=\rho_B(\bfX)$), the agent's expected utility \emph{given the others' choices} could not be improved above the one attained adopting the allocation $Y_\bfX^n$.
\end{itemize}
% Rephrasing \cite{biagini2020fairness}, 
The optimal allocation $Y^n_\bfX$
 and its value $\mathbb{E}_{Q_\bfX^n}[Y_\bfX^n]$ can thus be considered fair by
the $n-$th agent as $Y^n_\bfX$
 maximises the individual expected utility, \emph{provided all agents but $n$ act according to $\bfY_\bfX^{[-n]}$}, among all random
allocations with values not exceeding the budget $\mathbb{E}_{Q_\bfX^n}[Y_\bfX^n]$.

\subsection{Explicit formulas for the paired exponential utility function}
We conclude the first part of the work with an example, providing explicit formulas for all the quantities descried in the previous sections, given a paired exponential  (multivariate) utility function $U$ in the form \eqref{utilalternative}

\begin{theorem}
\label{thmexplicitformulas}
Take $\bfX\in(\Linfty)^N$, $\mathcal{C}=\mathcal{C}_\R$, $\alpha
_{1},\dots ,\alpha _{N}>0$, $U$ as in \eqref{utilalternative} and let $B<\sup_{z\in{\mathbb{R}}^N}U(z)=\frac{N^2}{2}$. 
Set 
\begin{equation}
\begin{gathered}
\beta :=\sum_{j=1}^{N}\frac{1}{\alpha _{j}};\quad
\Gamma:=\sum_{j=1}^N\frac{1}{\alpha_j}\log\left(\frac{1}{\alpha_j}\right);\quad \widehat{\lambda}(B):=-\frac{2}{\beta}\left(B-\frac{N^2}{2}\right)=\frac{N^2-2B}{\beta}>0;\\
S:=\sum_{j=1}^{N}X^{j};\quad d(\mathbf{X}) = \frac{\beta}{2}\log\left(\frac{\beta^2}{N^2-2B}\,\Ep{\exp\left(-\frac{2S}{\beta}\right)}\right) \in \mathbb R.
\end{gathered}
\label{defconstants}
\end{equation}
Then the value $\rho_B(\bfX)$, the  primal optimum $\bfY_\bfX \in L^\infty$, the dual optimum $\bfQ_\bfX\in\mcD$ and the fair risk allocations in \eqref{eqn:IndividualRiskAllocation} take the following forms.

\begin{align}
\rho_B(\mathbf{X})&= d(\mathbf{X})-\Gamma,\label{eqn:optrho}
\\
Y_\bfX^n(\omega)& = -X^n(\omega)+\frac{1}{\beta\alpha_n}(S(\omega)+d(\mathbf{X}))-\frac{1}{\alpha_n}\log(\frac{1}{\alpha_n}),\label{eqn:optY}
\\
\rn{Q_\bfX^1}{\probp}&=\dots=\rn{Q_\bfX^N}{\probp}=\rn{Q_\bfX}{\probp}= \dfrac{\exp\left(-2S/\beta\right)}{\mathbb{E}\left[\exp\left(-2S/\beta\right)\right]},\label{eqn:optQ}\\
\rho^n(\mathbf{X}) &= \mathbb{E}_{Q_\bfX}[Y_\bfX^n] = \mathbb{E}_\probp\left[Y_\bfX^n\cdot\frac{dQ_\bfX}{d\mathbb{P}} \right] .\notag
\end{align}
Furthermore,  for every $\bfQ\in\mcD$ (so that $\bfQ=(Q,\dots,Q)$) the penalty function in \eqref{eqn:alpha} is given by
$$\alpha_B(\bfQ)=\left(\Gamma-{\frac{\beta\log(\beta)}{2}}\right)+\frac{\beta}{2}\log\left({\widehat{\lambda}(B)}\right)+\frac{\beta}{2}H(\probq|\probp)$$
where $H(Q|\probp)=\Ep{\rn{\probq}{\probp}\log\rn{\probq}{\probp}}$ for $Q\ll\probp$ is the  relative entropy.
\end{theorem}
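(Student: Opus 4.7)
The plan is to use the dual representation of Theorem \ref{thm:recap} as the main engine. Since $\mcC=\mcC_\R$, every $\bfQ\in\mcD$ has the form $(Q,\dots,Q)$ for a single $Q\ll\probp$, so the dual reads $\rho_B(\bfX)=\sup_Q\{\mathbb{E}_Q[-S]-\alpha_B(Q,\dots,Q)\}$. The argument splits into (i) computing $\alpha_B(Q,\dots,Q)$ for the paired exponential $U$, (ii) solving the resulting entropy-penalized maximization in $Q$, and (iii) reading off the primal optimum $\bfY_\bfX$ from the corresponding first-order conditions.

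\textbf{Step 1 (penalty).} From \eqref{eqn:alpha}, $\alpha_B(Q,\dots,Q)=-\inf\{\mathbb{E}_Q[\sum_nZ^n]:\bfZ\in(\Linfty)^N,\,\Ep{U(\bfZ)}\geq B\}$. Writing $\Phi=\rn{Q}{\probp}$ and introducing a multiplier $\lambda>0$ for the expected-utility constraint, minimizing $\Ep{\Phi\sum_nZ^n-\lambda U(\bfZ)}$ pointwise gives the stationarity condition $\alpha_n A(\mathbf{z})e^{-\alpha_nz^n}=\Phi/\lambda$, with $A(\mathbf{z})=\sum_je^{-\alpha_jz^j}$. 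Summing in $n$ yields $A^2=\Phi\beta/\lambda$, and hence
$$z^n=\frac{\log\alpha_n}{\alpha_n}+\frac{1}{2\alpha_n}\log\!\left(\frac{\lambda\beta}{\Phi}\right).$$
Substituting back gives $U(\bfZ)=\tfrac{N^2}{2}-\tfrac{\Phi\beta}{2\lambda}$, so the binding constraint $\Ep{U(\bfZ)}=B$ forces $\lambda=1/\widehat\lambda(B)$. Summing $z^n$ in $n$ and taking $\mathbb{E}_Q[\cdot]$ then produces the stated formula for $\alpha_B$, the entropy $\tfrac{\beta}{2}H(Q|\probp)$ arising from $\mathbb{E}_Q[-\tfrac{\beta}{2}\log\Phi]$.

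\textbf{Step 2 (dual maximization).} Up to the constants $(\Gamma-\tfrac{\beta\log\beta}{2})+\tfrac{\beta}{2}\log\widehat\lambda(B)$ inside $\alpha_B$, the dual objective reduces to $\mathbb{E}_Q[-S]-\tfrac{\beta}{2}H(Q|\probp)$. The Donsker--Varadhan variational formula gives supremum $\tfrac{\beta}{2}\log\Ep{e^{-2S/\beta}}$ attained uniquely at $\rn{Q_\bfX}{\probp}\propto e^{-2S/\beta}$, which is the claimed density. Collecting constants and using $\tfrac{\beta}{2}\log(\beta^2/(N^2-2B))+\tfrac{\beta}{2}\log\Ep{e^{-2S/\beta}}=d(\bfX)$ yields $\rho_B(\bfX)=d(\bfX)-\Gamma$.

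\textbf{Step 3 (primal optimum and closing the loop).} Plugging $\Phi=\rn{Q_\bfX}{\probp}$ and $\lambda\beta=\beta^2/(N^2-2B)$ into the Step-1 formula and simplifying (using $d(\bfX)=\tfrac{\beta}{2}\log(\beta^2\Ep{e^{-2S/\beta}}/(N^2-2B))$) gives $X^n+Y_\bfX^n=Z^n=(S+d)/(\beta\alpha_n)+(\log\alpha_n)/\alpha_n$, which is exactly the claimed $Y_\bfX^n$. As sanity checks, $\sum_n Y_\bfX^n=d-\Gamma\in\R$ (so $\bfY_\bfX\in\mcC_\R\cap(\Linfty)^N$ since $\bfX\in(\Linfty)^N$), and $\sum_n e^{-\alpha_nZ^n}=\beta e^{-(S+d)/\beta}$ gives $\Ep{U(\bfX+\bfY_\bfX)}=B$. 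The main obstacle is justifying that the pointwise Lagrangian minimization in Step 1 actually computes the $(\Linfty)^N$-infimum in \eqref{eqn:alpha}; the cleanest workaround is that Theorem \ref{thm:recap}, under the extra boundedness and differentiability assumptions in force here, already guarantees existence and uniqueness of both the primal and dual optima. One then simply verifies that the explicit pair above is primal- and dual-feasible with matching values $d(\bfX)-\Gamma$, and strong duality identifies it as the unique optima, validating the formulas for $\bfY_\bfX$, $\bfQ_\bfX$ and, via \eqref{eqn:IndividualRiskAllocation}, for $\rho^n(\bfX)$.
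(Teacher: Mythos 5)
Your computations are all correct and reproduce the stated formulas, and the underlying engine (the conjugate $V$ of $U$, its first-order conditions, and convex duality) is the same as the paper's. The architecture, however, is genuinely different. The paper is verification-based: it writes down the candidate $\widehat{\bfY}=-\bfX-\nabla V\bigl(\widehat\lambda(B)\,\rn{\widehat Q}{\probp}\mathbf{1}\bigr)$, checks $\Ep{U(\bfX+\widehat{\bfY})}=B$ via the identity $U(-\nabla V(\bfw))=V(\bfw)-\sum_j w^j\partial_{w^j}V(\bfw)$, and then traps everything in a chain of inequalities (primal feasibility gives $\rho_B\le d-\Gamma$; the Fenchel inequality gives an upper bound on $\alpha_B(\widehat Q\mathbf{1})$; the dual representation closes the loop), so it never has to solve the inner supremum in \eqref{eqn:alpha} exactly. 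You instead derive $\alpha_B(Q,\dots,Q)$ for general $Q$ by a pointwise Lagrangian and then optimize over $Q$ with Donsker--Varadhan; the latter is a clean ingredient the paper does not use (the paper gets $\rho_B=d-\Gamma$ from the primal side, and obtains the general-$Q$ penalty by rewriting $\rho_B$ as a shifted entropic risk measure of $S$ and quoting the known entropy conjugate from F\"ollmer--Schied). Your route is more constructive and explains where the formulas come from; the paper's is more economical on rigor.

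The one genuine issue is the step you flag yourself: the pointwise stationary $\bfZ$ involves $\log\Phi$ and is generally unbounded, so it is not admissible in the supremum over $(\Linfty)^N$ defining $\alpha_B$. Your proposed workaround (feasibility of the explicit pair plus strong duality from Theorem \ref{thm:recap}) does rescue the identification of $\bfY_\bfX$, $\bfQ_\bfX$ and $\rho_B(\bfX)$, because the weak-duality half of your Lagrangian argument (valid for every feasible $\bfZ$) already gives the needed upper bound on $\alpha_B(\widehat Q\mathbf{1})$, and the sandwich then closes exactly as in the paper. But it does \emph{not} by itself justify the claimed equality $\alpha_B(\bfQ)=\bigl(\Gamma-\tfrac{\beta\log\beta}{2}\bigr)+\tfrac{\beta}{2}\log\widehat\lambda(B)+\tfrac{\beta}{2}H(Q|\probp)$ for \emph{every} $\bfQ\in\mcD$: there you still need the matching lower bound on the supremum, i.e.\ a truncation/approximation of the unbounded maximizer preserving $\Ep{U(\bfZ)}\ge B$, or the paper's reduction $\alpha_B(Q,\dots,Q)=\sup_{S}\bigl(\Eq{-S}-\rho_{2/\beta}(S-\tfrac{\beta}{2}\log(\beta/\widehat\lambda))\bigr)+\Gamma$ together with the known variational formula for relative entropy. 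Add one of these and the argument is complete.
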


\begin{proof}
    See Appendix \ref{appendix:expsol}.
\end{proof}

\section{Algorithms}
\label{section3:alg}

In this section, we extend the deep learning algorithms in \cite{feng2022deep} to the present framework, which allows for multivariate utilities, in order to approximate the solutions of the primal problem and dual problem. Since the dual problem involves optimization over the space of probability measures, we design a neural network to represent the Radon-Nikodym derivative which can be applied to the general change of measure problem from physical measure to risk neutral measure. 

In the following sections, we work under the assumptions of Theorem \ref{thm:recap} and the additional assumptions of its item 5 so that we have uniqueness of the solutions of the primal and dual problems,  and, in particular, thanks to $\mcC=\mcC_\R$, both $\rn{\bfQ_\bfX}{\probp}$ and $\bfX+\bfY_\bfX$ are (essentially) $\sigma(X^1+\dots+X^N)$-measurable. This last property justifies our parametrization $\mathbf{Y}= \varphi(\mathbf{X})$ in the primal problem in section \ref{sec:primal} and $\frac{d Q}{d \PP}=\Theta(\mathbf{X})$ in the dual problem in section \ref{sec:dual} for deterministic functions $ \varphi$ and $\Theta$. In either case, even though from the theoretical point of view we know that $\bfX+\bfY_\bfX$ and $\rn{\bfQ_\bfX}{\probp}$ should be functions of the sum $S=\sum_{j=1}^N X^j$, we do not impose this condition at a neural network level. Instead, we let the algorithms learn such a dependence and use it a posteriori as a sanity check, especially in cases where no explicit solution is available as in sections \ref{sec:OtherUNormalgroupExp} and \ref{sec:PairedUBetagroupExp}. Finally, we stress that the (optimal) functions $\varphi,\Theta$ are not universal, in that they depend on $\mathbf{X}$ through its law under $\probp$, as can be seen explicitly in \eqref{eqn:optY}.

\subsection{Primal Problem}\label{sec:primal}
We take $\varphi: \mathbb{R}^N \to \mathbb{R}^N$ to denote the fully connected neural networks parameterized by weights and biases $(w, b)$. They take the risk factor $\mathbf{X}(\omega)\in \mathbb{R}^N$ as input and generate cash allocation $\mathbf{Y}(\omega)\in \mathbb{R}^N$ as output for any scenario $\omega\in\Omega$. To ease the notation, we will omit $\omega$ in the rest of the paper when the context is clear. More precisely, 
$$\mathbf{Y} := (Y^1, \dots, Y^N) = (\varphi_{1}(\mathbf{X}), \dots, \varphi_{N}(\mathbf{X}) )=:\varphi(\mathbf{X}).$$

Based on the primal Problem \ref{problem:primal}, we first add a penalty for the variance of total cash allocation to the loss function since the problem requires the total cash added to the system to be deterministic. We then add a second penalty term to deal with the acceptance set restriction in the problem setup. As a result, the objective function for our deep learning task, designed for the primal problem, becomes
\begin{align}
\label{eq:rho_tilde}
    J_{\text{primal}}(\varphi) := \Ep{\sum_{i} \varphi_i(\mathbf{X})} & + \mu \cdot \Var\big(\sum_i \varphi_i(\mathbf{X})\big) \nonumber \\
    &+ \lambda \cdot \big(B-\EE_\probp\big[U(\mathbf{X}+\mathbf{Y}) \big]\big)^+, 
\end{align}
where $\mu, \lambda$ are hyperparameters and we write $$\tilde{\rho}(\mathbf{X}) = \inf_{w, b} J_{\text{primal}}(\varphi).$$

In \eqref{eq:rho_tilde}, the second term is the penalty for the variance and the third term is the penalty for failure of falling into the acceptance set. With the proper choice of hyperparameters $\mu$ and $\lambda$, the two penalties are very close to $0$ at optimal, which renders $\tilde{\rho}(\mathbf{X}) \approx \rho(\mathbf{X})$. In practice, we will compute the empirical estimation of $J_{\text{primal}}(\varphi)$ by using Monte Carlo algorithm to estimate the variance and expectation in \eqref{eq:rho_tilde}. The details are provided in Algorithm \ref{algo:primal}.

\begin{algorithm}
\caption{Primal problem.}
\label{algo:primal}
\begin{algorithmic}
\REQUIRE Data $\{\mathbf{X}(\omega_i)\}_{i=1}^{\text{batch}}$, neural net $\varphi$, function $U$ and hyperparameters $\mu, \lambda$, $B$, learning rate $\gamma$, Epochs

\FOR{$e=1$ to Epochs}
\STATE compute empirical estimation $\hat{J}$ of $J_{\text{primal}}(\varphi)$  by Monte Carlo
\STATE compute gradients $\nabla_w\hat{J}$ and $\nabla_b\hat{J}$
\STATE update $\varphi$: $w=w-\gamma\nabla_w\hat{J}$ and $b=b-\gamma\nabla_b\hat{J}$
\ENDFOR
\STATE compute empirical estimation: $\hat{\rho}=\hat{J}_{\text{primal}}(\varphi)$ 
\ENSURE Updated neural net $\varphi$, $\hat{\rho}$
\end{algorithmic}
\end{algorithm}

\subsection{Dual Problem}\label{sec:dual}
We  only consider the single group case $\mcC=\mcC_\R$ for the dual problem \ref{problem:dual} where all the measures $Q^n$ are the same by Theorem \ref{thm:recap}, and we  simply write $Q$ instead of $Q^n$ for all $n$ in the following discussion. That is, the polarity restriction in \eqref{eq:dual_restriction_on_Q} can be automatically ignored because we are able to interchange the order of expectation and summation and the probability measure space can be simplified to \eqref{polarity}. {\label{groups} The case of multiple groups (as defined e.g.  in \cite{biagini2020fairness} Definition 2.5) could be treated as well.} We also stress that while we a priori only look at vectors of probability measures with equal components by theoretical arguments, {\label{finitepen}we do not a priori impose the finiteness requirement in \eqref{polarity}, as this is supposed to be learned by the algorithm in the optimization. On the practical side, no particular issues arise in the approximation, as we estimate from finite samples.}
On one hand, we use the neural network $\Theta: \RR^N \to \RR^+$, parameterized by $(w_\theta, b_\theta)$ to estimate the Radon-Nikodym derivative $\frac{d Q}{d \PP}$ with respect to the physical measure. $\Theta$ takes $\mathbf{X}$ as input and generates nonnegative output with unit mean, which can be realized by using a final \textbf{Softplus}\footnote{$\text{Softplus}(x) = \log(1+\exp(x))$.} layer and dividing the outputs by their average. 

% {\color{red}----------------------------

% EREASE ALL OF THIS On the other hand, we construct another neural network $\Psi:\RR^N \to \RR^N$, parameterized by $(w_\psi, b_\psi)$, taking $\mathbf{X}$ as arguments to generate random variables 
% $$\mathbf{Z}:=(Z^1,\dots, Z^N)=(\Psi_1(\mathbf{X}), \dots, \Psi_N(\mathbf{X}))=:\Psi(\mathbf{X}).$$ 
%  Since $\Theta$ and $\Psi$ has $\omega$-by-$\omega$ outputs, we can estimate $\alpha_B(\mathbf{Q})$ by
% $$\alpha_B(\mathbf{Q}) \equiv \alpha_B(\Psi, \Theta) = \sup_{w_\psi, b_\psi}\left\{ \sum_{n=1}^N \EE[-\Psi_n(\mathbf{X})\Theta(\mathbf{X}) ] \right\}.$$

% ---------------------------------}
% {\color{red}REPLACE WITH:

\label{estimatealpha}We will need to estimate $\alpha_B(\mathbf{Q})$, in particular for $\mathbf{Q}$ having $\sigma(\mathbf{X})$-measurable Radon-Nikodym derivative $\frac{d Q}{d \PP}=\Theta(\mathbf{X})$. 
We observe that from \eqref{eqn:alpha}, whenever $\frac{d Q}{d \PP}=\Theta(\mathbf{X})$, we have
$$\alpha_B(\bfQ):= \sup\left\{ \sum_{n=1}^N\mathbb{E}_{Q}[-Z^n]\mid \mathbf{Z}\in(\Linfty(\sigma(\mathbf{X})))^N, \Ep{U(\mathbf{Z})}\geq B\right\}$$
for $\sigma(\mathbf{X})$ the sigma algebra generated by the random vector $\mathbf{X}$.
Indeed, the ineuqlity ($\geq$) is clear, and we show ($\leq$): for every $\mathbf{Z}\in \mathbf{Z}\in(\Linfty)^N$ s.t.  $\Ep{U(\mathbf{Z})}\geq B$ we also clearly have $\widehat{\mathbf{Z}}:=\Ep{\mathbf{Z}\middle|\sigma(\mathbf{X})}\in(\Linfty(\sigma(\mathbf{X})))^N$ and $\Ep{U(\widehat{\mathbf{Z}})}=\Ep{U(\Ep{\mathbf{Z}\middle|\sigma(\mathbf{X})})}\geq\Ep{\Ep{U(\mathbf{Z})\middle|\sigma(\mathbf{X})}}=\Ep{U(\mathbf{Z})}\geq B.$ Moreover, $\sum_{n=1}^N\mathbb{E}_{Q}[-Z^n]=\sum_{n=1}^N\Ep{-Z^n\theta (\mathbf{X})}$ so that by standard manipulations $\sum_{n=1}^N\mathbb{E}_{Q}[-Z^n]=\sum_{n=1}^N\Ep{\Ep{-Z^n\middle|\sigma(\mathbf{X})}\theta (\mathbf{X})}=\sum_{n=1}^N\mathbb{E}_{Q}[-\widehat{Z}^n]$.

Based on the previous argument, to approximate $\alpha_B(\bfQ)$ we construct another neural network $\Psi:\RR^N \to \RR^N$, parameterized by $(w_\psi, b_\psi)$, taking $\mathbf{X}$ as arguments to generate random variables 
$$\mathbf{Z}:=(Z^1,\dots, Z^N)=(\Psi_1(\mathbf{X}), \dots, \Psi_N(\mathbf{X}))=:\Psi(\mathbf{X}).$$ 

Similarly to the primal problem, we have a constraint in evaluating $\alpha_B(\mathbf{Q})$ imposed by the acceptance set $\mathcal{A}$ and thus we include a penalty term when the random variable $\mathbf{Z}$ falls outside of $\mathcal{A}$. Denoting the objective function to be optimized for $\alpha_B(\mathbf{Q})$ by 
\begin{align*}
    J_\alpha (\Psi, \Theta)=  \sum_{n=1}^N \EE_\probp[-\Psi_n(\mathbf{X})\Theta(\mathbf{X}) ] - \lambda_\alpha \big(B - \EE_\probp\big[ U(\Psi(\mathbf{X}))\big]\big)^+,
\end{align*}
we write for the approximation of $\alpha_B(\bfQ)=\alpha_B(\Theta)$
$$\tilde{\alpha}_B(\Theta) = \sup_{w_\psi, b_\psi} J_\alpha (\Psi, \Theta).$$
Therefore, the objective function for dual problem is given by
\begin{align*}
    J_{\text{dual}}(\Theta) := \sum_{n=1}^N \EE_\probp\big[ -X^n\cdot \Theta(\mathbf{X}) \big] - \tilde{\alpha}_B( \Theta)
\end{align*}
and we write
$$\tilde{\rho}(\mathbf{X}) = \sup_{w_\theta, b_\theta} J_{\text{dual}}(\Theta).$$
%}
At optimal, the penalty terms in the objective functions are almost $0$ and thus the estimations $\tilde{\alpha}_B$ and $\tilde{\rho}$ can approximate the true $\alpha_B$ and $\rho$ very well in the dual Problem \ref{problem:dual}. Since the training process based on $J_{\text{dual}}$ does not depend on the parameters $w_\psi, b_\psi$ of neural network $\Psi$. Our model is trained in the same fashion as training GANs where two neural networks contesting with each other. In practice, we follow the same convention in Algorithm 1 in \cite{goodfellow2020generative} where we calculate $\tilde{\alpha}_B$ with Monte Carlo in each step and compute gradients for both neural networks $\Psi,\Theta$ at the same time. We provide a detailed description\footnote{We use $\hat{\EE}$ to represent empirical expectation in this algorithm.} in Algorithm \ref{algo:dual}. 
% {\color{red}ALESSANDRO: I do not understand the details of the algorithm 2. From the previous discussion and our previous call it seems that we should proceed as follows: [0] initialize $\Psi,\Theta$, [1] Gradient descent over $\Psi$, get $\tilde{\alpha}(\Theta)$ and an approx optimum $\hat{\Psi}(\Theta)$, [2] one step of gradient descent for $\Theta$ using $J_{dual}$ (note that the optimum $\hat{\Psi}$ depends on $\theta$, how do we d derivatives?), then repeat from [1]. instead the algorithm now seems to do one step of gradient descent for both $\Theta$ and $\Psi$ at the same time, and that's it. If this is the case, it is unclear where $\tilde{\alpha}$ is optimized to get $J_{dual}$. The way Alg 2 is written, I understand it does: (0) initialize $\Psi,\Theta$, (1) compute via monte carlo
% $$\sum_{n=1}^N \EE_\probp\big[ -X^n\cdot \Theta(\mathbf{X}) \big]-\left(\sum_{n=1}^N \EE_\probp[-\Psi_n(\mathbf{X})\Theta(\mathbf{X}) ] - \lambda_\alpha \big(B - \EE_\probp\big[ U(\Psi(\mathbf{X}))\big]\big)^+\right)$$
% (so that we treat $\Psi$ as if it were already an optimum for $J_\alpha$, and $J_\alpha\sim \alpha_B(\Theta)$),
% (2) compute the gradient of all of this together and update AT THE SAME TIME $(\Theta,\Psi)$, with a $+$ for maximizing over $\Theta$ and a $-$ for minimizing over $\Psi$, then repeat from (1). The latter case  with () is somehow "less" than the one with [] and seems like a competing gradient descent, so maybe more on point. In either case, this to be made much more clear I believe, both in algorithm and in the text!
% }

\begin{algorithm}
\caption{Dual problem.}
\label{algo:dual}
\begin{algorithmic}
\REQUIRE Data $\{\mathbf{X}(\omega_i)\}_{i=1}^{\text{batch}}$, neural nets $\Psi, \Theta$, function $U$ and hyperparameters $\lambda_\alpha$, $B$, learning rates $\gamma_1, \gamma_2$, Epochs

\FOR{$e=1$ to Epochs}
\STATE compute empirical estimation $\hat{J}_\alpha$ of $J_{\alpha}(\Psi, \Theta)$  by Monte Carlo

\STATE compute empirical estimation $\hat{J}_{\text{dual}}$ of $J_{\text{dual}}(\Psi, \Theta)$  by Monte Carlo based on
$$\hat{J}_{\text{dual}}=\sum_{n=1}^N \hat{\EE}\big[ -X^n\cdot \Theta(\mathbf{X}) \big] -\hat{J}_\alpha $$

\STATE compute gradients $\nabla_{w_\psi}\hat{J}_{\alpha}, \nabla_{b_\psi}\hat{J}_{\alpha}$ and $\nabla_{w_\theta}\hat{J}_{\text{dual}}, \nabla_{b_\theta}\hat{J}_{\text{dual}}$

\STATE update $\Psi$: $$w_\psi=w_\psi - \gamma_1\nabla_{w_\psi}\hat{J}_{\alpha},\quad b_\psi=b_\psi - \gamma_1\nabla_{b_\psi}\hat{J}_{\alpha}$$

\STATE update $\Theta$: $$w_\theta=w_\theta + \gamma_2\nabla_{w_\theta}\hat{J}_{\text{dual}}, \quad b_\theta=b_\theta + \gamma_2\nabla_{b_\theta}\hat{J}_{\text{dual}}$$

\ENDFOR
\STATE compute empirical: $\hat{\alpha}_B(\mathbf{Q})=\hat{J}_\alpha(\Psi, \Theta)$ and $\hat{\rho}=\hat{J}_{\text{dual}}(\Psi, \Theta)$
\ENSURE Neural nets $\Psi, \Theta, \hat{\alpha}_B(\mathbf{Q}), \hat{\rho}$
\end{algorithmic}
\end{algorithm}

%\begin{remark} \label{remark:single_group}
%For simplicity of description, we consider only one group in an $N$-institution system and the optimal risk allocation strategy from the primal and dual problem. 
%\end{remark}

In this section, we construct two deep learning algorithms that can approximate the risk allocations $\varphi(\mathbf{X})$, the overall risk $\tilde{\rho}$ and Radon-Nikodym derivative $\Theta(\mathbf{X})$. We conclude this section by providing the approximation to \textit{fair} risk allocations \eqref{eqn:IndividualRiskAllocation} of each financial institution which combines Algorithm \ref{algo:primal}-\ref{algo:dual}, i.e. 

$$\rho^n(\mathbf{X})=\EE\big[ \varphi_n(\mathbf{X}) \Theta(\mathbf{X}) \big],\quad \forall n=1,\dots,N.$$

\section{Experiments}
\label{section4:exp}
We justify Algorithm \ref{algo:primal}-\ref{algo:dual} with three experiments. First, in section~\ref{sec:PairedNormalgroupExp}, we assume Gaussian distributed risk factors 
%(\textcolor{blue}{not centered mean, where rho too small})
and paired exponential utility function where we can compare experiment results with explicit results shown in Theorem \ref{thmexplicitformulas}. Then we still take the same risk factors but apply other choice of the utility function and present numerical results as solutions in section~\ref{sec:OtherUNormalgroupExp}. In the last experiment of section~\ref{sec:PairedUBetagroupExp}, we create correlated beta distributed risk factors and search for the numerical solutions under exponential utility functions.%We conclude that given a financial system, we can predict the overall risk allocation and the distribution of risk allocations for all individual banks at terminal time $T$, as well as the \textit{fair} risk allocations for institutions at time $0$. The algorithms can be used to help decide, in order to secure a system in terms of the systemic risk measures defined in Problem \ref{problem:primal} or Problem \ref{problem:MoreRealisticSetup}, how much cash from each individual institution $n$ should be collected, i.e. $\EE_Q[Y^n]$, and how much to distribute, i.e. $Y^n(\omega)$ under the scenario $\omega$, at the beginning and end of the period, respectively.

Both of our training and testing data consist of 50000 samples. Each sample is a realization of the $N$ ($=10$) dimensional vector $\mathbf{X}$, representing risk factors of $10$ positively correlated financial institutions who are jointly normally distributed (we actually use a very large truncation to ensure $\bfX\in(\Linfty)^N$ as in the theoretical part of the paper). We choose individual exponential utility functions 
\begin{equation}\label{eqn:individualexp}
u_n(x) = 1-e^{-\alpha_n x}\quad \text{for}\quad n=1,\ldots,10,
 \end{equation}
 where $\{\alpha_n, n=1,\ldots,10\}$ are the utility parameters for all financial institutions and they are drawn randomly from a uniform distribution between 1 and 3. As a result, the values of $\{\alpha_n, n=1,\ldots,10\}$ in the following experiments are 
 \begin{equation}\label{alphaValues}
     [1.11,\, 1.20,\, 1.36,\, 1.89,\, 1.94, \, 2.04,\, 2.27,\, 2.33,\, 2.63,\, 2.99].
 \end{equation}
 We select $B < 0$ and  we use Stochastic Gradient Descent as our deep learning optimizer for all experiments.  At each epoch, we pick a mini-batch of training samples whose batch size is 1000 and calculate gradient descents to update our parameters.
{\label{refproject} We mention that other approaches like a gradient projection method  could in principle be applied to solve our constrained optimization problem. Furthermore, a projection step generalizing to an infinite dimensional\slash functional setup the one in  \cite{feinstein2022deep} Section 3.1 could ensure feasibility and (desirable) overestimation of the overall risk $\rho_B(\bfX)$ (regarding overestimation, one could also replace $B$ with $B+\epsilon$ for a small constant $\epsilon$). At the same time, since we  need to apply a Monte Carlo method, we still expect some error which might interfere with actual overestimation. We leave these interesting issues to further research.}  For the other detailed descriptions of hyperparameters and hidden layer sizes, please see Appendix \ref{appendix:expdetails}.
 
To assess the quality of our algorithm, we consider the following evaluation metrics:
\begin{itemize}
	\item \textbf{Absolute difference.} Absolute value of the difference between estimation and theoretical solution.
    \item \textbf{Overall relative difference (ORD).} Let $\hat{E}$ be an estimation of $E$, we define the ORD by
    $$R(\hat{E}, E) = \frac{\|\hat{E} - E\|_1}{\|E\|_1}$$
    with $\|\cdot\|_1$ as the $l_1$ metric when $E$ is a deterministic vector, and $\|\cdot\|_1$ as the $L_1$ metric when $E$ is a random variable.
\end{itemize}
Small values of evaluation metrics imply better performance.

\subsection{Paired Exponential Utility}
\label{sec:PairedNormalgroupExp}

With the utility function introduced in Theorem \ref{thmexplicitformulas}, we compare with the optimal solutions \eqref{eqn:optrho}-\eqref{eqn:optQ} generated by Monte Carlo method to show the accuracy of our proposed Algorithms \ref{algo:primal}-\ref{algo:dual}.

\paragraph{Evaluation.} First, we show performance of numerical results for the estimated overall risk allocation ${\rho}$ and the estimated penalty ${\alpha_B}$ in Table~\ref{tab:RhoAlpha}, along with the expected optimal results. We run our experiments 10 times and show the average values of our estimations along with the standard deviation. The absolute differences (Abs. Difference) are also small which indicates the estimation for the overall risk is quite accurate. 
\begin{table}
\centering
  \caption{{Values of $\rho$ and $\alpha_B$}}
  \label{tab:RhoAlpha}
  \begin{tabular}{lcc}
    \toprule
    &$\rho$ &$\alpha_B$ \\
    \midrule
    Theoretical & -14.544 & 0.688\\
    Estimated (mean) & -14.511  & 0.702 \\  
    Estimated (standard deviation) & 0.0327  & 0.0142\\  
  \midrule
  \textbf{Abs. Difference}&0.033 &0.014
  \\
  \bottomrule
\end{tabular}
\end{table}

To assess the goodness of estimation of Radon-Nikodym derivatives of the optimizer, we use the overall relative difference to measure how one derivative function is different from the reference derivative function, i.e. 
\[
R(\widehat{\frac{\mathrm{d}Q_\bfX}{\mathrm{d}\mathbb{P}}},{\frac{\mathrm{d}Q_\bfX}{\mathrm{d}\mathbb{P}}}).
\]
It turns out the ORD of estimated Radon-Nikodym derivative is {\bf 5.2\%} and the behavior of the estimated measure derivative $\widehat{\frac{\mathrm{d}Q_\bfX}{\mathrm{d}\mathbb{P}}}$ 
in terms of the sum of risk factors $S(\omega)$ for each scenario $\omega$ is shown in Figure~\ref{fig:Q-S}. Both the results  show it fits optimal Radon-Nikodym derivative very well.
\begin{figure}[h]
  \centering
  \includegraphics[width=0.7\linewidth]{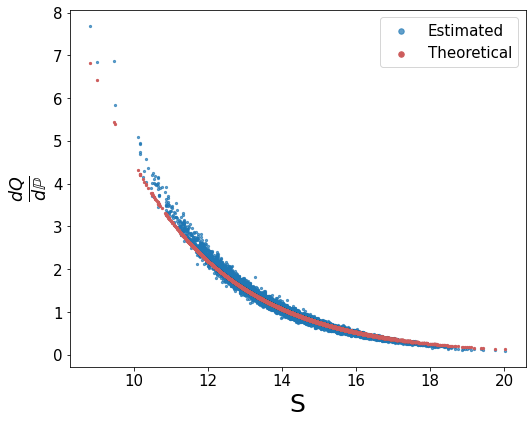}
  \caption{Behavior of $\widehat{\frac{\mathrm{d}Q_\bfX}{\mathrm{d}\mathbb{P}}}(\omega)$ against $S(\omega)$.}
  \label{fig:Q-S}
\end{figure}

Table~\ref{tab:EQY} shows the estimated \textit{fair} risk allocations for all institutions
\[
\widehat{\mathbb{E}_{Q_\bfX}[Y_\bfX]}\,:=\, \left(\widehat{\mathbb{E}_{Q_\bfX}[Y_\bfX^n]}\right)_{n=1,\ldots,10} = \left(\mathbb{E}_\probp\left[\widehat{Y_\bfX^n}\cdot \widehat{\frac{dQ_\bfX}{d\mathbb{P}}}\right]\right)_{n=1,\ldots,10}
\]
and their theoretical optimal values. The overall relative difference (ORD) for them is defined as 
\[
R\,(\widehat{\mathbb{E}_{Q_\bfX}[Y_\bfX]},{\mathbb{E}_{Q_\bfX}[Y_\bfX]}).
\]
The ORD is {\bf 4.80\%} which shows {great approximation of our algorithms to the theoretical optimal \textit{fair} allocation.}
\begin{table*}
  \caption{Estimated and theoretical optimal values of ${\mathbb{E}_{Q_\bfX}[Y_\bfX^n]}$ in one-group.}
  \label{tab:EQY}
  \centering
  \begin{tabular}{cccccccccccc}
    \toprule
     $n$&1 & 2 &3&4&5&6\\
    \midrule
    Theoretical& -0.98 &-0.73 &-0.47 &-0.65 &-0.93 &-1.81
     \\
     Estimated & -0.93 &-0.68 &-0.39 &-0.56 &-0.81 &-1.73  \\
    \bottomrule
    $n$&7&8&9&10& \textbf{ORD}\\
    \midrule
    Theoretical &-1.96 &-2.22 &-2.36 &-2.44& \multirow{2}{*}{4.80\%} 
     \\
     Estimated & -1.89 &-2.19 &-2.35 &-2.33\\
    \bottomrule
  \end{tabular}
\end{table*}

%In conclusion, every optimal value we estimated shows excellent fitness with respect to the theoretical optimal value given by explicit formulas. Thus our algorithms provide reliable and accurate results.

\subsubsection{Sanity Check of Radon-Nikodym Derivatives}
Based on the optimal solution and the observation from Figure~\ref{fig:Q-S}, we find the Radon-Nikodym derivative ${\frac{\mathrm{d}Q_\bfX}{\mathrm{d}\mathbb{P}}}$ is only negatively exponentially correlated with the sum of the risk factors $S = \sum_{n=1}^N X^n$. Thus here we would like to further check the dependency of ${\frac{\mathrm{d}Q_\bfX}{\mathrm{d}\mathbb{P}}}$ on $S$ by fixing the risk factor sum $S = 15$. We still follow the same setup of the distribution and utility as before and show the performance in Table~\ref{tab:RhoAlpha-fixSum} and Figure~\ref{fig:Q-S-fixSum}.
\begin{table}
\centering
  \caption{Values of $\rho$ and $\alpha_B$}
  \label{tab:RhoAlpha-fixSum}
  \begin{tabular}{lcc}
    \toprule
    &$\rho$ &$\alpha_B$ \\
    \midrule
   Theoretical & -15.348 & 0.348\\
   Estimated & -15.347 & 0.347\\
  \midrule
  \textbf{Abs. Difference}&0.001 &0.001
  \\
  \bottomrule
\end{tabular}
\end{table}

\begin{figure}[h]
  \centering
  \includegraphics[width=0.7\linewidth]{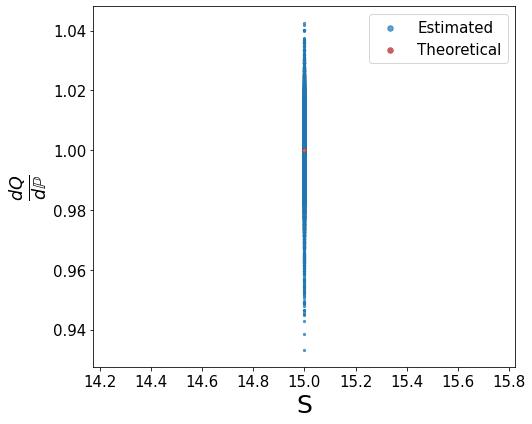}
  \caption{Behavior of $\widehat{\frac{\mathrm{d}Q_\bfX}{\mathrm{d}\mathbb{P}}}(\omega)$ against $S(\omega)$.}
  \label{fig:Q-S-fixSum}
\end{figure}

The estimations of the overall risk allocation $\rho$ and the term $\alpha_B$ are close to the theoretical values. Their values are consistent with the fact that the sum of the overall risk and $\alpha_B$ term should be equal to the sum of risk factors, according to the dual problem. In the figure, the estimation of the Radon-Nikodym derivative ${\frac{\mathrm{d}Q_\bfX}{\mathrm{d}\mathbb{P}}}$ is around 1 for all the sample, and the overall relative difference is just {\bf 0.92\%}. Our algorithm can correctly capture the relationship between the Radon-Nikodym derivative and the sum of risk factors.

\subsection{Other Multivariate Utility}
\label{sec:OtherUNormalgroupExp}
Another explicit example of the multivariate utility functions satisfying the necessary condition in Assumption~\ref{safeassumtpion} is given by 
\begin{equation}
    U(x_1,\ldots,x_N) = \sum\limits_{n=1}^N u_n(x^n)+
    \left(1-\exp{(-p\,\sum\limits_{n=1}^N\beta_nx^n)}\right),
\end{equation}
with $\beta_n\geq 0$ for $n=1,\ldots,N$ and $p>1$. Here $u_1,\ldots,u_N$ are exponential utility functions defined in \ref{eqn:individualexp}, i.e. $u_n(x)=1-e^{-\alpha_n x}$ for all $n$,
and the utility parameters are given in \ref{alphaValues}.
There is no explicit solution under this setup and we will implement it with our algorithms assuming correlated Gaussian risk factors $\mathbf{X}$ for $10$ financial institutions.

Following the same spirit of choosing $\{\alpha_n,n=1,\ldots,10\}$, we draw the coefficients $\{\beta_n,n=1,\ldots,10\}$ from a uniform distribution between 0 and 1 and thus their values are
\begin{equation}\label{valuesof betas}
    [0.65,\, 0.96,\, 0.04,\, 0.72,\, 0.77,\, 0.15,\, 0.97,\, 0.60,\, 0.81,\, 0.89].
\end{equation}
Taking $p=2$, the estimated overall risk allocation $\hat{\rho} = -14.383$; and the estimated \textit{fair} risk allocations for all institutions $\left\{\mathbb{E}_{Q_\bfX}[Y_\bfX^n], n=1,\ldots,10\right\}$ are
\begin{equation*}
 [-0.95 ,-0.79 ,-0.5,  -0.63, -0.76 ,-1.73 ,-1.91, -2.27, -2.43, -2.4].
\end{equation*}
Note that the sum of the estimated \textit{fair} risk allocations is $-14.37$ and it is close to the estimated overall risk allocation above. It shows the consistency of our results in this case without explicit solutions.
We also find that the estimated Radon-Nikodym derivative is a function of the sum of the risk factors $S$, for any values of $p$, as shown in Figure~\ref{fig:Q-S-otherU}. This is consistent with item 5 in Theorem~\ref{thm:recap} stating that $\rn{Q_\bfX}{\probp}$ are (essentially) $\sigma(S)$-measurable.
\begin{figure}[h]
  \centering
\includegraphics[width=0.7\linewidth]{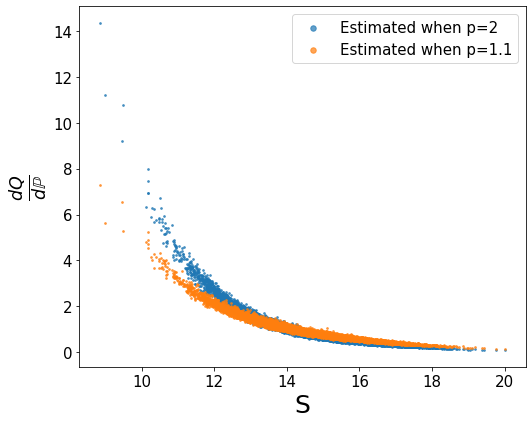}
  \caption{Behavior of $\widehat{\frac{\mathrm{d}Q_\bfX}{\mathrm{d}\mathbb{P}}}(\omega)$ against $S(\omega)$.}
  \label{fig:Q-S-otherU}
\end{figure}

\subsection{Alternative Risk Distribution}
\label{sec:PairedUBetagroupExp}
In section~\ref{sec:PairedNormalgroupExp}, we assumed a correlated Gaussian distribution for the risk factors. In this section we consider a correlated beta distribution instead. For a $N$-player system, we generate $N+1$ identically and independently distributed random variables $(Z_1,\ldots,Z_{N+1})$ following \textbf{Beta(2,5)} distribution. Then we can obtain correlated risk factors by taking $(X_1,\ldots,X_N) = (Z_1+Z_{N+1},\ldots,Z_N+Z_{N+1})$. Assuming the paired exponential utility case in section~\ref{sec:PairedNormalgroupExp}, we have explicit solutions as shown in \eqref{eqn:optrho}-\eqref{eqn:optQ}. We generate the optimal solutions by Monte Carlo method and show the accuracy of our estimation.

\paragraph{Evaluation.} First, the estimations of overall risk allocation of ${\rho}$ and the estimated penalty ${\alpha_B}$ are shown in Table~\ref{tab:RhoAlpha-beta}. We repeat the experiments for 10 times and show the estimations along with the standard deviations. The overall relative difference of the Radon-Nikodym derivatives is {\bf 3.03\%} and its behavior with respect to the sum of the risk factors is shown in Figure~\ref{fig:Q-S-beta}. Table~\ref{tab:EQY-beta} compares the estimations and optimal solutions of \textit{fair} risk allocations for all institutions. The overall relative difference is {\bf 5.34\%}.
From all the results above, we can conclude our estimations for all quantities fit the theoretical solutions very well.

\begin{table}
\centering
  \caption{{Values of $\rho$ and $\alpha_B$}}
  \label{tab:RhoAlpha-beta}
  \begin{tabular}{lcc}
    \toprule
    &$\rho$ &$\alpha_B$ \\
    \midrule
   Theoretical & -5.600 & 0.749\\
   Estimated (mean) & -5.565 & 0.743\\
   Estimated (standard deviation) & 0.0345 & 0.0065\\
  \midrule
  \textbf{Abs. Difference}&0.035 &0.006
  \\
  \bottomrule
\end{tabular}
\end{table}

\begin{figure}[h]
  \centering
  \includegraphics[width=0.7\linewidth]{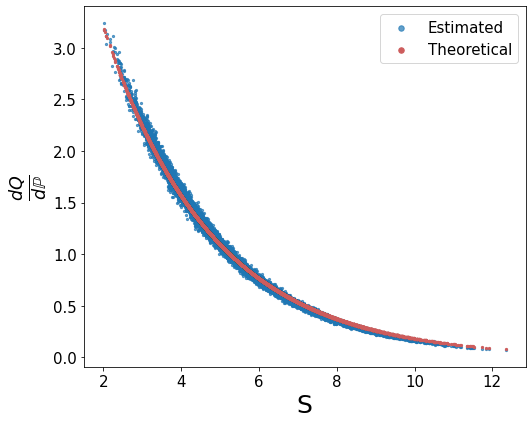}
  \caption{Behavior of $\widehat{\frac{\mathrm{d}Q_\bfX}{\mathrm{d}\mathbb{P}}}(\omega)$ against $S(\omega)$.}
  \label{fig:Q-S-beta}
\end{figure}

\begin{table*}
  \caption{Estimated and theoretical optimal values of ${\mathbb{E}_{Q_\bfX}[Y_\bfX^n]}$ in one-group.}
  \label{tab:EQY-beta}
  \centering
  \begin{tabular}{cccccccccccc}
    \toprule
     $n$&1 & 2 &3&4&5&6&\\
    \midrule
    Theoretical& -0.99& -0.89 &-0.75& -0.5 & -0.49& -0.46
     \\
     Estimated & -0.95& -0.84& -0.72& -0.48& -0.46 &-0.44\\
    \bottomrule
    $n$&7&8&9&10& \textbf{ORD}\\
    \midrule
    Theoretical& -0.42& -0.41& -0.37& -0.34& \multirow{2}{*}{5.34\%}
     \\
     Estimated & -0.39& -0.38& -0.33 &-0.3 \\ 
    \bottomrule
  \end{tabular}
\end{table*}

\newpage

\appendix
\section{Appendix}
\subsection{Proof of Theorem \ref{thmexplicitformulas}}\label{appendix:expsol}

Recall the form \eqref{utilalternative} for $U$, where $\alpha
_{1},\dots ,\alpha _{N}>0$ are fixed constants. Recall also the notation in \eqref{defconstants} and set additionally $A_{j}:=\frac{1}{\alpha_j}\log\left(\frac{1}{\alpha_j}\right)$.

To begin with, we need the following auxiliary result.
\begin{lemma}
\label{lemmacomputeV}
Take $U(\cdot)$ as in \eqref{utilalternative}. 
Then $V(\bfw):=\sup_{\bfx\in\R^N}\left(U(\bfx)-\sum_{j=1}^Nx^jw^j\right), \bfw\in(0,+\infty)^N$ satisfies: 
\begin{align*}
V(w^1,\dots,w^N)&=\frac{N^2}{2}+\sum_{j=1}^N\left(\frac{w^j}{\alpha_j}\log\left(\frac{w^j}{\alpha_j}\right)\right)+\\
&-\frac{1}{2}\left[\sum_{j=1}^N\frac{w^j}{\alpha_j}+\left(\sum_{j=1}^N\frac{w^j}{\alpha_j}\right)\log\left(\sum_{j=1}^N\frac{w^j}{\alpha_j}\right)\right]\,.
\end{align*}
and for any $z\in (0,+\infty)$ s.t. $(z,\dots,z)\in\mathrm{dom}(V)$.
\begin{align}
\label{valueVz}
&V(z,\dots, z)=\frac{N^2}{2}+\beta z\log(z)+z\Gamma-\frac{1}{2}\left(z\beta+\beta z\log(z)+z\beta\log(\beta)\right)\\
\label{valuegradV}
&\frac{\partial V}{\partial w^j}(z,\dots, z)=\frac{1}{\alpha_j}\log\left(\frac{z}{\alpha_j}\right)-\frac{1}{2\alpha_j}\log\left(z\beta\right)\\
\label{valuediv}
&V(z,\dots, z)-z\sum_{j=1}^N\frac{\partial V}{\partial w^j}(z,\dots, z)=\frac{N^2}{2}-\frac{\beta}{2}z
\end{align}
\end{lemma}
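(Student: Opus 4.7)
The plan is to compute $V(\bfw)$ by directly solving the concave maximization problem defining it and then to verify the three specializations by routine substitution. Since $U$ in \eqref{utilalternative} is strictly concave and for $\bfw\in(0,+\infty)^N$ the linear penalty $-\sum_j x^j w^j$ makes the objective coercive (as any $x^j\to+\infty$ the linear term dominates, while as $x^j\to-\infty$ the exponential squared term in $U$ blows down to $-\infty$), the supremum is attained at a unique stationary point. Writing $T:=\sum_n \exp(-\alpha_n x^n)$, a direct computation gives $\partial_{x^j}U(\bfx)=\alpha_j\exp(-\alpha_j x^j)\,T$, so the first-order condition reads $\alpha_j\exp(-\alpha_j x^j)\,T=w^j$, i.e.\ $\exp(-\alpha_j x^j)=\frac{w^j}{\alpha_j T}$. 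Summing these identities over $j$ yields the scalar equation $T^2=\sum_j \frac{w^j}{\alpha_j}$, which pins down $T$ and hence $x^{*j}=-\frac{1}{\alpha_j}\log\!\left(\frac{w^j}{\alpha_j T}\right)$.

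Substituting back, $U(\bfx^*)=\frac{N^2}{2}-\frac{T^2}{2}$ and
$$\sum_j x^{*j} w^j = -\sum_j \frac{w^j}{\alpha_j}\log\!\left(\frac{w^j}{\alpha_j}\right) + (\log T)\sum_j \frac{w^j}{\alpha_j} = -\sum_j \frac{w^j}{\alpha_j}\log\!\left(\frac{w^j}{\alpha_j}\right) + T^2\log T.$$
Using $T^2=\sum_j w^j/\alpha_j$ and $T^2\log T=\tfrac{1}{2}(\sum_j w^j/\alpha_j)\log(\sum_j w^j/\alpha_j)$, subtracting these two quantities gives exactly the claimed closed form for $V(\bfw)$ after collecting terms.

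For the three specialized identities, substitute $\bfw=(z,\dots,z)$ and use $\sum_j w^j/\alpha_j=z\beta$ together with $\sum_j \frac{1}{\alpha_j}\log\!\left(\frac{z}{\alpha_j}\right)=\beta\log z+\Gamma$, which is immediate from the definition of $\Gamma$; this yields \eqref{valueVz}. For \eqref{valuegradV}, differentiate $V$ term-by-term: the derivative of the cross term $-\frac{1}{2}(\sum_k w^k/\alpha_k)\log(\sum_k w^k/\alpha_k)$ contributes $-\frac{1}{2\alpha_j}(1+\log(\sum_k w^k/\alpha_k))$, which combines with the derivative of $-\frac{1}{2}\sum_k w^k/\alpha_k$ and with the derivative of $\sum_k\frac{w^k}{\alpha_k}\log(w^k/\alpha_k)$ so that, after evaluation at $\bfw=(z,\dots,z)$, the constants $\frac{1}{\alpha_j}$ and $-\frac{1}{2\alpha_j}-\frac{1}{2\alpha_j}$ cancel and the stated expression remains. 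Finally, \eqref{valuediv} is an arithmetic consequence of the previous two: multiplying \eqref{valuegradV} by $z$, summing over $j$, and subtracting from \eqref{valueVz} causes the $\beta z\log z$, $z\Gamma$, and $\frac{\beta z}{2}\log\beta$ contributions to cancel in pairs, leaving precisely $\frac{N^2}{2}-\frac{\beta z}{2}$.

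There is essentially no serious obstacle; the only conceptual point is justifying that the critical point solves the sup, which follows from strict concavity and coercivity above. The remainder is careful bookkeeping of logarithms and sums, best organized by first proving the general formula for $V$ and only then specializing, so that \eqref{valueVz}--\eqref{valuediv} become mechanical consequences rather than independent derivations.
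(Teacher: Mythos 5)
Your proposal is correct and follows essentially the same route as the paper's proof: solve the first-order condition $\alpha_j e^{-\alpha_j x^j}\sum_n e^{-\alpha_n x^n}=w^j$, sum to pin down $T=\sum_n e^{-\alpha_n x^n}$ (the paper's $h$) via $T^2=\sum_j w^j/\alpha_j$, and substitute back to obtain the closed form and its gradient. Your added remarks on coercivity/attainment and the explicit verification of \eqref{valueVz}--\eqref{valuediv} are sound but only make explicit steps the paper leaves implicit.
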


\begin{proof}
Given $V(\bfw) = \sup_{\bfx\in \mathbb{R}^N}\left(U(\bfx)-\sum_{j=1}^Nx^jw^j\right)$,
\begin{align}
    &\frac{\partial V}{\partial x^j}(\bfw) = \frac{\partial U}{\partial x^j}(\bfx)-w^j=0\quad \Longrightarrow\quad \text{the optimal choice }\bfx^* \text{ satisfies}:\nonumber
    \\
    &w^j = \alpha_je^{-\alpha_jx^{j,*}}\sum\limits_{j=1}^Ne^{-\alpha_jx^{j,*}} = h\alpha_je^{-\alpha_jx^{j,*}}\quad \text{where}\quad h = \sum\limits_{j=1}^Ne^{-\alpha_jx^{j,*}},\label{eqn:w}
    \\
    &\frac{\partial^2 V}{\partial (x^j)^2}(\bfw) = \frac{\partial^2 U}{\partial (x^j)^2}(x) = -\alpha_j^2e^{-\alpha_jx^{j,*}}\sum\limits_{j=1}^N e^{-\alpha_jx^{j,*}}-\alpha_j^2e^{-\alpha_jx^{j,*}}e^{-\alpha_jx^{j,*}} < 0.\nonumber
\end{align}
Meanwhile, \eqref{eqn:w} also implies
\begin{align*}
    &he^{-\alpha_jx^{j,*}} = \frac{w^j}{\alpha_j};\quad h = \left(\sum\limits_{j=1}^N\frac{w^j}{\alpha_j}\right)^{\frac{1}{2}};\quad
    -\alpha_jx^{j,*} = \log\frac{w^j}{\alpha_j}-\frac{1}{2}\log\left(\sum\limits_{j=1}^N\frac{w^j}{\alpha_j}\right).
\end{align*}
So under the maximizer $x^*$,
\begin{align*}
    V(\bfw) &= \sup_{x\in \mathbb{R}^N}\left(U(x)-\sum_{j=1}^Nx^jw^j\right) = U(x^*)-\sum_{j=1}^Nx^{j,*}w^j
    \\
    &=\frac{N^2}{2}-\frac{1}{2}\left(\sum\limits_{j=1}^Ne^{-\alpha_jx^{j,*}}\right)^2-\sum_{j=1}^Nx^{j,*}w^j
    \\
    &=\frac{N^2}{2}-\frac{1}{2}\sum\limits_{j=1}^N\frac{w^j}{\alpha_j}-\sum_{j=1}^Nx^{j,*}\cdot h\alpha_je^{-\alpha_jx^{j,*}}
    \\
    &=\frac{N^2}{2}-\frac{1}{2}\sum\limits_{j=1}^N\frac{w^j}{\alpha_j}+\sum_{j=1}^N\left(\log\frac{w^j}{\alpha_j}-\frac{1}{2}\log\left(\sum\limits_{j=1}^N\frac{w^j}{\alpha_j}\right)\right)\frac{w^j}{\alpha_j}
    \\
    &=\frac{N^2}{2}+\sum\limits_{j=1}^N\,\frac{w^j}{\alpha_j}\log\frac{w^j}{\alpha_j}-\frac{1}{2}\left[\sum\limits_{j=1}^N\frac{w^j}{\alpha_j}+\sum\limits_{j=1}^N\frac{w^j}{\alpha_j}\log\left(\sum\limits_{j=1}^N\frac{w^j}{\alpha_j}\right) \right].
\end{align*}
By direct computation, the derivative can be obtained 
\[
\frac{\partial V}{\partial w^j}(\bfw) = \frac{1}{\alpha_j}\log\frac{w^j}{\alpha_j}-\frac{1}{2\alpha_j}\log\left(\sum\limits_{j=1}^N\frac{w^j}{\alpha_j}\right).
\]
\end{proof}

\begin{proof}[Proof of Theorem \ref{thmexplicitformulas}]
In the proof, to avoid heavy notation, we drop the specification of the dependence of $\bfX$ of the optima, and use a "hat" superscript to denote them: $\bfY_\bfX=\widehat{\bfY},\bfQ_\bfX=\widehat{\bfQ}
$.

We first show that all the requirements on $U$ in Assumption \ref{safeassumtpion} are met. 
We observe that for $\varepsilon\geq0$
\begin{align*}
 U(\bfx-\varepsilon\bfone)&= \frac{N^2}{2}-\frac{1}{2}\left(\sum_{n=1}^N\exp\left(\alpha_n\varepsilon\right)\exp\left(-\alpha_nx^n\right)\right)^2\\
 &\geq  \frac{N^2}{2}-\frac{1}{2}\left(\sum_{n=1}^N\exp\left(\max_n\alpha_n\varepsilon\right)\exp\left(-\alpha_nx^n\right)\right)^2\\
 &=\frac{N^2}{2}-\exp\left(2\max_n\alpha_n\varepsilon\right)\frac{1}{2}\left(\sum_{n=1}^N\exp\left(-\alpha_nx^n\right)\right)^2\\
 &=\left(1-\exp\left(2\max_n\alpha_n\varepsilon\right)\right)\frac{N^2}{2}+\exp\left(2\max_n\alpha_n\varepsilon\right)U(x)
\end{align*}

so that, since the negative part function is nonincreasing and subadditive  \[\left(U(\bfx-\varepsilon\bfone)\right)^-\leq \left(\exp\left(2\max_n\alpha_n\varepsilon\right)-1\right)\frac{N^2}{2}+\exp\left(2\max_n\alpha_n\varepsilon\right)\left(U(\bfx)\right)^-.\]
Hence, if $\bfX\in (\Lone)^{N}$ satisfies $(U(\bfX))^{-}\in
\Lone$ then there exists $\delta >0$
s.t. $(U(\bfX-\varepsilon \bfone))^{-}\in \Lone$ for all $ 0\leq \varepsilon <\delta$.
Moreover for some constant $\theta\in\R$
\begin{align*}
  \Lambda(-\lambda\abs{\bfx})&=\frac{1}{2}\sum\limits_{n,m=1; n\neq m}^N(1-e^{\lambda(\alpha_n \abs{x^n}
+\alpha_m\abs{x^m})})\\
&\geq \theta-\frac{1}{2}\sum\limits_{n,m=1; n\neq m}^N\frac12\left(e^{2\lambda\alpha_n \abs{x^n}
}+e^{2\lambda\alpha_m \abs{x^m}
}\right) \\
\end{align*}
If for $\bfX\in (L^{0}\left( (\Omega ,\mathcal{F},{%
\mathbb{P}});[-\infty ,+\infty ]\right) )^{N}$ there exist $\lambda
_{1},\dots ,\lambda _{N}>0$ such that $\Ep{
u_{j}(-\lambda _{j}\left\vert X^{j}\right\vert )} >-\infty $ (which implies $\bfX$ is a.s. finite valued in particular), then
for $\alpha=\frac12\min_j\lambda_j>0$ we have $\Ep{
u_{j}(-2\alpha\left\vert X^{j}\right\vert )}>-\infty$ for every $j$, and  $\Ep{ \Lambda
(-\alpha \left\vert \bfX\right\vert )} >-\infty $ by the computation above.

We can now move to proving the validity of the formulas provided in the statement.
Observe that $\rn{\widehat{\probq}}{\probp}\in\Linfty$ and that  $ \widehat{\lambda}(B)\in \R$ is well defined and $\widehat{\lambda}(B)<0$ since $B<\sup_\bfx U(\bfx)=\frac{N^2}{2}$. Moreover  we have \[\widehat{\bfY}=-\bfX-\nabla V\left(\widehat{\lambda}(B)\rn{\widehat{\probq}}{\probp}\mathbf{1}\right)\,,\quad \widehat{\bfY}\in (\Linfty)^N\,\quad \sum_{j=1}^N\widehat{Y}^j=d(\bfX)-\Gamma.\]

Indeed, using \eqref{valuegradV} 
\begin{align*}
&\phantom{=}-X^j-\frac{\partial V}{\partial w^j}\left(\widehat{\lambda}\rn{\widehat{\probq}}{\probp}\mathbf{1}\right)=-X^j-\left[\frac{1}{\alpha_j}\log\left(\frac{1}{\alpha_j}\widehat{\lambda}(B)\rn{\widehat{\probq}}{\probp}\right)-\frac{1}{2\alpha_j}\log\left(\beta\widehat{\lambda}\rn{\widehat{\probq}}{\probp}\right)\right]\\
&=-X^j+\frac{1}{2\alpha_j}\left[\log(\beta)-\log\left(\widehat{\lambda}\rn{\widehat{\probq}}{\probp}\right)\right]-\frac{1}{\alpha_j}\log\left(\frac{1}{\alpha_j}\right)\\
&=-X^j+\frac{1}{2\alpha_j}\left[\log(\beta)+\log\left(\frac{1}{\widehat{\lambda}(B)}\Ep{\exp\left(-\frac{2S}{\beta}\right)}\right)-\log\left(\exp\left(-\frac{2S}{\beta}\right)\right)\right]+\\
&-\frac{1}{\alpha_j}\log\left(\frac{1}{\alpha_j}\right)
\\
&=-X^j+\frac{1}{2\alpha_j}\left[\log\left(\frac{\beta}{\widehat{\lambda}(B)}\Ep{\exp\left(-\frac{2S}{\beta}\right)}\right)+\frac{2}{\beta}S\right]-\frac{1}{\alpha_j}\log\left(\frac{1}{\alpha_j}\right)\\
&=-X^j+\frac{1}{\beta\alpha_j}\left[\frac{\beta}{2}\log\left(\frac{\beta}{\widehat{\lambda}(B)}\Ep{\exp\left(-\frac{2S}{\beta}\right)}\right)+S\right]-\frac{1}{\alpha_j}\log\left(\frac{1}{\alpha_j}\right)\\
&=-X^j+\frac{1}{\beta\alpha_j}\left[d(\bfX)+S\right]-\frac{1}{\alpha_j}\log\left(\frac{1}{\alpha_j}\right)\,.
\end{align*}
$\widehat{\bfY}\in (\Linfty)^N$ is evident, and so is $\sum_{j=1}^N\widehat{Y}^j=d(\bfX)-\Gamma$.

  Furthermore 
  \begin{equation}
  \label{expressionB}
  \Ep{U(\bfX+\widehat{\bfY})}=\Ep{V\left(\widehat{\lambda}(B)\rn{\widehat{\probq}}{\probp}\mathbf{1}\right)-\widehat{\lambda}(B)\rn{\widehat{\probq}}{\probp}\sum_{j=1}^N\frac{\partial V}{\partial w^j}\left(\widehat{\lambda}(B)\rn{\widehat{\probq}}{\probp}\mathbf{1}\right)}=B
  \end{equation}

 where the first equality derives from the well known identity $U(-\nabla V(\bfw))=V(y)-\sum_{j=1}^Nw^j\frac{\partial V}{\partial w^j}(\bfw)$, for $\bfw\in\mathrm{int}(\mathrm{dom}(V))$ (see \cite{Rocka}   V.\S 26), and the second equality is checked by direct computation: 
by \eqref{valuediv}  we have \[V\left(\widehat{\lambda}\rn{\widehat{\probq}}{\probp}\mathbf{1}\right)-\widehat{\lambda}\rn{\widehat{\probq}}{\probp}\sum_{j=1}^N\frac{\partial V}{\partial w^j}\left(\widehat{\lambda}\rn{\widehat{\probq}}{\probp}\mathbf{1}\right)=-\frac{\beta}{2}\widehat{\lambda}\rn{\widehat{\probq}}{\probp}+\frac{N^2}{2}.\] and we can use  $\Ep{\rn{\widehat{\probq}}{\probp}}=1$.
 
 Then $\widehat{\bfY}$ is admissible for $\rho_B(\bfX)$, and $\rho_B(\bfX)\leq \sum_{j=1}^N\widehat{Y}^j=d(\bfX)-\Gamma$. 
 Also, by Fenchel inequality $\alpha_B(\widehat{\probq}\mathbf{1})\leq \frac{1}{\widehat{\lambda}(B)}\left(-B+\Ep{V\left(\widehat{\lambda}(B)\rn{\widehat{\probq}}{\probp}\mathbf{1}\right)}\right)$, and the latter by the previous computations equals $\sum_{j=1}^N\mathbb{E}_{\widehat{\probq}}\left[-{X^j}\right]-d(\bfX)+\Gamma\leq \mathbb{E}_{\widehat{\probq}}\left[-\sum_{j=1}^N{X^j}\right]-\rho_B(\bfX)\leq \alpha_B(\widehat{\probq}\mathbf{1})$.
 Since this implies $d(\bfX)-\Gamma=\sum_{j=1}^N\mathbb{E}_{\widehat{\probq}}\left[-{X^j}\right]-\alpha_B(\widehat{\probq}\mathbf{1})\leq \rho_B(\bfX)$ (the latter inequality coming form the dual representation of $\rcondinfty{\cdot}$), we conclude that $\rcondinfty{X}=d(X)$, 
$\widehat{Y}$ is a primal optimum and $\widehat{\probq}\mathbf{1}$ is a dual optimum. More in detail:
  \begin{align*}
  &\alpha_B(\widehat{\probq}\mathbf{1}) : =\sup\left\{ \frac{1}{\widehat{\lambda}}\sum_{j=1}^{N}\Ep{-X^j\widehat{\lambda}\rn{\widehat{\probq}}{\probp}} \mid \bfZ\in (\Linfty)^{N},\,\mathbb{E}_{\mathbb{P}}\left[ U\left( \bfZ\right) %
\right] \geq B\right\} \\
  &\stackrel{\mathrm{Fenchel}}{\leq }\sup\left\{ \frac{1}{\widehat{\lambda}}\left(\sum_{j=1}^{N}\Ep{V\left(\widehat{\lambda}\rn{\widehat{\probq}}{\probp}\mathbf{1}\right)}-\Ep{U(Z)}\right) \mid \bfZ\in (\Linfty)^{N},\,\mathbb{E}_{\mathbb{P}}\left[ U\left( \bfZ\right) %
\right] \geq B\right\}\\
&\leq \frac{1}{\widehat{\lambda}}\left(-B+\sum_{j=1}^{N}\Ep{V\left(\widehat{\lambda}\rn{\widehat{\probq}}{\probp}\mathbf{1}\right)}\right)\\
\end{align*}
\begin{align*}
&\stackrel{\eqref{expressionB}}{=} -\frac{1}{\widehat{\lambda}}\left\{\Ep{V\left(\widehat{\lambda}\rn{\widehat{\probq}}{\probp}\mathbf{1}\right)-\widehat{\lambda}\rn{\widehat{\probq}}{\probp}\sum_{j=1}^N\frac{\partial V}{\partial w^j}\left(\widehat{\lambda}\rn{\widehat{\probq}}{\probp}\mathbf{1}\right)}\right\}+
\\&+\frac{1}{\widehat{\lambda}}\sum_{j=1}^{N}\Ep{V\left(\widehat{\lambda}\rn{\widehat{\probq}}{\probp}\mathbf{1}\right)}=\Ep{\rn{\widehat{\probq}}{\probp}\sum_{j=1}^N\frac{\partial V}{\partial w^j}\left(\widehat{\lambda}\rn{\widehat{\probq}}{\probp}\mathbf{1}\right)}\\
&=\Ep{\rn{\widehat{\probq}}{\probp}\left(-\sum_{j=1}^N X^j\right)-\sum_{j=1}^N\left(-X^j-\frac{\partial V}{\partial w^j}\left(\widehat{\lambda}\rn{\widehat{\probq}}{\probp}\mathbf{1}\right)\right)}\\
 &=\sum_{j=1}^N\mathbb{E}_{\widehat{\probq}}\left[-{X^j}\right]-\mathbb{E}_{\widehat{\probq}}\left[\sum_{j=1}^N\widehat{Y}^j\right]\\
 &=\sum_{j=1}^N\mathbb{E}_{\widehat{\probq}}\left[-{X^j}\right]-\sum_{j=1}^N\widehat{Y}^j=\sum_{j=1}^N\mathbb{E}_{\widehat{\probq}}\left[-{X^j}\right]-d(\bfX)+\Gamma
  \end{align*}
  using previous computations for $\widehat{Y}$ in the last equality.

As to the explicit expression for $\alpha_B$, we observe that 

$$\rho_B(\bfX)=\rho_{\frac{2}{\beta}}\left(S-{\frac{\beta}{2}}\log\left(\frac{\beta}{\widehat{\lambda}}\right)\right)-\Gamma$$ for $\rho_{\frac{2}{\beta}}(X)=\frac{\beta}{2} \log\left(\Ep{e^{-\frac{2X}{\beta}}}\right)$ as in \cite{FollmerSchied2} Example 4.34.

From the computations in \cite{doldi2021conditional} Claim 5.8 one sees that 
\begin{align*}
&\phantom{=}\alpha_B(\probq,\dots,\probq)=\rho_B^*\left(-\rn{\probq}{\probp},\dots, -\rn{\probq}{\probp}\right)=\sup_{\bfX\in (\Linfty)^N}\left(\Eq{-\sum_{j=1}^NX^j}-\rho(X)\right)\\
&=\sup_{S\in \Linfty}\left(\Eq{-S}-\rho_{\frac{2}{\beta}}\left(S-{\frac{\beta}{2}}\log\left(\frac{\beta}{\widehat{\lambda}}\right)\right)\right)+\Gamma\\
&=\sup_{S\in \Linfty}\left(\Eq{-S+{\frac{\beta}{2}}\log\left(\frac{\beta}{\widehat{\lambda}}\right)}-\rho_{\frac{2}{\beta}}\left(S-{\frac{\beta}{2}}\log\left(\frac{\beta}{\widehat{\lambda}}\right)\right)\right)+\Gamma-{\frac{\beta}{2}}\log\left(\frac{\beta}{\widehat{\lambda}}\right)\\
&=\frac{\beta}{2}H(\probq|\probp)+\Gamma-{\frac{\beta}{2}}\log\left(\frac{\beta}{\widehat{\lambda}}\right)=\left(\Gamma-{\frac{\beta\log(\beta)}{2}}\right)+\frac{\beta}{2}\log\left({\widehat{\lambda}}\right)+\frac{\beta}{2}H(\probq|\probp)
\end{align*}
where  we used the entropy formula in the same Example 4.34. 
\end{proof}
\subsection{Auxiliary Results}
\begin{prop}
\label{propuniquedual}
Under Assumption \ref{safeassumtpion} suppose additionally that $u_1,\dots,u_N$ are bounded from above, and that $U$ is differentiable. Then there exists a unique optimum for \eqref{DynRMeqdualreprshorfall} in $\mcD$, and if $\mcC=\mcC_\R$ both $\rn{\bfQ}{\probp}$ and $\bfX+\bfY_\bfX$ are $\sigma(X^1+\dots+X^N)$-measurable.
\end{prop}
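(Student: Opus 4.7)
The plan is to exploit uniqueness of the primal optimum from Theorem \ref{thm:recap}, strict concavity/differentiability of $U$, and the primal--dual first-order relationship. I would first handle the $\sigma(S)$-measurability by a direct conditioning argument, and then deduce uniqueness of the dual optimum from the fact that each density $\rn{Q^n_\bfX}{\probp}$ is pinned down by $\nabla U(\bfX+\bfY_\bfX)$.

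For the $\sigma(S)$-measurability in the case $\mcC=\mcC_\R$, let $S:=\sum_j X^j$ and define
\[
\tilde Y^n := \Ep{X^n+Y^n_\bfX \,\big|\, \sigma(S)}-X^n,\quad n=1,\dots,N.
\]
Then $\sum_n \tilde Y^n = \Ep{S+\sum_n Y^n_\bfX\,|\,S}-S = \sum_n Y^n_\bfX \in \R$, so $\tilde\bfY\in\mcC_\R$, and $\tilde\bfY\in(\Lone)^N$ since $\bfY_\bfX\in\mcL\subseteq(\Lone)^N$ by Theorem \ref{thm:recap}. Applying Jensen's inequality to the concave $U$ and the $\sigma(S)$-conditional expectation yields
\[
\Ep{U(\bfX+\tilde\bfY)} = \Ep{U\!\left(\Ep{\bfX+\bfY_\bfX\,|\, S}\right)}\geq \Ep{\Ep{U(\bfX+\bfY_\bfX)\,|\,S}}=\Ep{U(\bfX+\bfY_\bfX)} \geq B,
\]
so $\tilde\bfY$ is admissible in \eqref{eqn:foroptimumprimal} with the same cost $\sum_n \tilde Y^n = \sum_n Y^n_\bfX$ as $\bfY_\bfX$. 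The uniqueness of the primal optimum (Theorem \ref{thm:recap}, Item 2) forces $\tilde\bfY = \bfY_\bfX$ a.s., which is exactly the statement that each $X^n+Y^n_\bfX$ is $\sigma(S)$-measurable.

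For the uniqueness of the dual optimum, the key step is to derive, for any dual optimum $\bfQ_\bfX\in\mcD$, the pointwise first-order relation
\[
\rn{Q^n_\bfX}{\probp} \;=\; \lambda\, \frac{\partial U}{\partial x^n}(\bfX+\bfY_\bfX),\quad n=1,\dots,N,
\]
where $\lambda>0$ is the Lagrange multiplier associated with the constraint $\Ep{U(\bfZ)}\geq B$ in the inner supremum defining $\alpha_B(\bfQ_\bfX)$, pinned down by $\Ep{\rn{Q^n_\bfX}{\probp}}=1$. The extra hypotheses enter precisely here: boundedness from above of the $u_j$ ensures that the Fenchel conjugate $V=U^*$ is finite on a neighborhood of the relevant densities, so the inner sup in $\alpha_B$ is attained at $\bfX+\bfY_\bfX$; and differentiability of $U$ collapses the subdifferential $\partial U$ at $\bfX+\bfY_\bfX$ to the singleton $\{\nabla U(\bfX+\bfY_\bfX)\}$, upgrading the subgradient inclusion to the displayed gradient identity. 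Since $\bfY_\bfX$ is unique by Theorem \ref{thm:recap} and $U$ is differentiable, the right-hand side is uniquely determined, so each $\rn{Q^n_\bfX}{\probp}$, and hence $\bfQ_\bfX$, is unique in $\mcD$. In the case $\mcC=\mcC_\R$, the $\sigma(S)$-measurability of $\rn{\bfQ_\bfX}{\probp}$ follows immediately from the gradient identity together with the $\sigma(S)$-measurability of $\bfX+\bfY_\bfX$ established above.

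The main obstacle will be the rigorous derivation of this first-order condition. The primal--dual saddle relation involves an infinite-dimensional Lagrangian over $(\Linfty)^N$, and one must verify a constraint qualification as well as attainment of the inner supremum in $\alpha_B(\bfQ_\bfX)$ before invoking Fenchel equality componentwise. A convenient route is to first identify $\bfX+\bfY_\bfX$ as the unique inner maximizer in $\alpha_B(\bfQ_\bfX)$ -- a consequence of the zero duality gap in \eqref{DynRMeqdualreprshorfall} combined with strict concavity of $U$, which itself follows from the strict concavity of each $u_j$ -- and then exploit differentiability of $U$ to pass from the resulting subgradient inclusion to the pointwise gradient identity.
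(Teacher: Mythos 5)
Your conditioning argument for the $\sigma(S)$-measurability of $\bfX+\bfY_\bfX$ is correct and genuinely different from the paper's: defining $\tilde Y^n = \Ep{X^n+Y^n_\bfX\mid\sigma(S)}-X^n$, checking $\tilde\bfY\in\mcC_\R\cap(\Lone)^N$ with the same deterministic sum, invoking conditional Jensen for the concave $U$, and then appealing to uniqueness of the primal optimizer is a clean, self-contained route. The paper instead obtains this measurability (and that of the densities) by citing \cite{doldi2022multivariate} Proposition 4.10, so on this point your argument is more elementary and more transparent.

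The gap is in the dual uniqueness. Your whole argument hinges on the pointwise identity $\rn{Q^n_\bfX}{\probp}=\lambda\,\frac{\partial U}{\partial x^n}(\bfX+\bfY_\bfX)$, and you correctly flag its derivation as the main obstacle, but you do not close it, and closing it is where essentially all the work lies. Two concrete difficulties: (i) the supremum defining $\alpha_B(\bfQ_\bfX)$ in \eqref{eqn:alpha} runs over $(\Linfty)^N$, whereas your candidate maximizer $\bfX+\bfY_\bfX$ only lies in $\mcL$, so attainment must be argued on an enlarged domain (\eqref{eq:optimaareOK} is stated for $\mcL$, not for the $(\Linfty)^N$-supremum in \eqref{eqn:alpha}); (ii) to pass from optimality to the a.s. subgradient inclusion $\lambda\rn{\bfQ_\bfX}{\probp}\,\bfone\in\partial U(\bfX+\bfY_\bfX)$ one needs the a.s. Fenchel equality $U(\bfZ^*)+V\bigl(\lambda\rn{\bfQ_\bfX}{\probp}\bfone\bigr)=\lambda\sum_n Z^{*,n}\rn{Q^n_\bfX}{\probp}$, and to make the expectations in that argument legitimate one must first know $\Ep{V\bigl(\lambda\rn{\bfQ}{\probp}\bfone\bigr)}<+\infty$ for some $\lambda>0$. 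That integrability statement is precisely the technical heart of the paper's proof: it is established via Orlicz-space duality ($L^\Phi=\times_n L^{\Phi_n}$, hence $K_\Phi=\times_n L^{\Phi_n^*}$; every $\bfQ\in\mcD$ lies in $K_\Phi$; boundedness of the $u_j$ converts $\Phi_j^*$-integrability of the densities into $v_j$-integrability; and boundedness of $\Lambda$ gives $V(\bfy)\leq\sum_n v_n(y^n)$). The paper then never touches first-order conditions at all: it feeds this integrability ($\mcD\subseteq\mathcal{Q}_{V,\mcB}$ with $\mcB=\mcC$) into the mSORTE uniqueness theorem, \cite{doldi2022multivariate} Theorem 4.6, via Theorem \ref{thm:msorte}. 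So your plan is not wrong in spirit, but as written it assumes exactly the step that needs proof, and filling it in would force you to reproduce the Orlicz integrability argument anyway; note also that your claimed $\sigma(S)$-measurability of $\rn{\bfQ_\bfX}{\probp}$ rests on the same unproven identity.
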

\begin{proof}
The claim on uniqueness follows from Theorem \ref{thm:msorte} together with the uniqueness result for mSORTE in \cite{doldi2022multivariate} Theorem 4.6,  provided that for $\mcB=\mcC$ we show $\mcD\subseteq\mathcal{Q}_{V,\mcB}$ (see (23) in \cite{doldi2022multivariate} for the definition of the latter, and observe that for $\mcB=\mcC$ and $\bfX\in(\Linfty)^N$ Standing Assumption II in \cite{doldi2022multivariate} holds).
This given, we can also infer the claim about measurability, from \cite{doldi2022multivariate} Proposition 4.10.
To show $\mcD\subseteq\mathcal{Q}_{V,\mcB}$, it is enough to prove that for any $\bfQ\in\mcD$ we have $\Ep{V\left(\lambda\rn{\bfQ}{\probp}\right)}<+\infty$ for some $\lambda>0$, which is now our aim.
 For all unexplained notation on Orlicz spaces we defer to \cite{doldi2021conditional} Section  2.1, and to \cite{doldi2021conditional} Section 5.2.2 for an explanation on how to naturally associate a multivariate Orlicz space to the setup of this paper.
 Under Assumption \ref{safeassumtpion}, we have $L^\Phi=\times_{n=1}^NL^{\Phi_n}$ (see \cite{doldi2021conditional} Assumption 5.12 together with  the comment immediately following it), which by \cite{doldi2022multivariate} Proposition 2.5.3 yields $K_\Phi=\times_{n=1}^NL^{\Phi^*}_n$.   By \cite{doldi2021conditional} Lemma A.11, any element of $\mcD$ belongs to $K_\Phi$, so that there exist $\lambda_1,\dots,\lambda_N>0$ for which $\Ep{\Phi_j^*\left(\lambda_j\rn{Q^j}{\probp^j}\right)}<+\infty,j=1,\dots,N$. Up to choosing the minimum of these by monotonicity of $\Phi^*_j$, we can assume that $\lambda_1=\dots=\lambda_N=\lambda$ .
 
 Now, by e.g. \cite{DoldiThesis21} Remark 2.2.8 we then have that, for $Z\geq 0$, $\Ep{v_j(Z)}<+\infty$ is equivalent to $\Ep{\Phi_j(Z)}<+\infty$ by boundedness of $u_1,\dots,u_N$, so that $\sum_{j=1}^N\Ep{v_j\left(\lambda\rn{Q^j}{\probp^j}\right)}$
 By boundedness of $\Lambda$ we have for all $\bfy\in\R^N$ 
 \[V(\bfy)=\sup_{\bfx\in\R^N}\left(U(\bfx)-\sum_{n=1}^Nx^ny^n\right)\leq \sum_{n=1}^N\sup_{x\in\R}\left(u_n(x)-xy^n\right)=\sum_{n=1}^Nv_n(y^n).\]
 Collecting our findings, we get that for any $\bfQ\in\mcD$ \[\Ep{V\left(\lambda\rn{\bfQ}{\probp^j}\right)}\leq\sum_{j=1}^N\Ep{v_j\left(\lambda\rn{Q^j}{\probp^j}\right)}<+\infty \]
 which implies $\bfQ\in\mathcal{Q}_{V,\mcB}$.
\end{proof}

\subsection{Experiment Details}\label{appendix:expdetails}
 \paragraph{Primal Problem.} In the experiment for the primal problem, we assign $(1,0.02)$ to be the initial values of the hyperparameters $(\mu,\lambda)$. There are 2 hidden layers and their dimensions are $[64,64]$. The learning rate is 0.00001 and there are 1600 epochs in total. We double the hyperparametes every 400 epochs to avoid vanishing gradient. This action makes sure the penalty terms converge to 0 while not bringing too much effect on the objective function at the beginning. This is a tuning method based on our experience and observation from the training process.

\paragraph{Dual Problem.} In the experiment for the dual problem, we have two neural networks $\Psi$ and $\Theta$. The value of the hyperparameter $\lambda_\alpha$ in $\Psi$ is 0.1. We assume 2 hidden layers for both neural networks and their dimensions are $[64,64]$. The learning rates for them are both 0.001. We take 1000 epochs and the learning rates are adjusted to a half of it every 200 epochs.

\paragraph{Training Time.} Based on the choice of layers and number of epochs, the training time for the first experiment in section \ref{sec:PairedNormalgroupExp} is 13 minutes on a standard laptop. The other experiments discussed in the paper cost similar time to train.

\section*{Funding}

Jean-Pierre Fouque was supported by  NSF grants DMS-1814091 and DMS-1953035.

\bibliographystyle{abbrv}
\bibliography{reference}

\end{document}